\documentclass[11pt]{article}
\usepackage{dilab_arxiv}
\usepackage{amsmath,amsfonts,bm}

\def\eqref#1{equation~\ref{#1}}
\def\1{\bm{1}}

\DeclareMathAlphabet{\mathsfit}{\encodingdefault}{\sfdefault}{m}{sl}
\SetMathAlphabet{\mathsfit}{bold}{\encodingdefault}{\sfdefault}{bx}{n}

\usepackage{enumitem}
\usepackage{flafter}
\usepackage{placeins}
\usepackage{bookmark}
\apptocmd{\thebibliography}{\raggedright}{}{}

\newtheorem{theorem}{Theorem}[section]
\newtheorem{assumption}[theorem]{Assumption}
\newtheorem{proposition}[theorem]{Proposition}
\newtheorem{lemma}[theorem]{Lemma}
\newtheorem{corollary}[theorem]{Corollary}
\newtheorem{definition}[theorem]{Definition}
    
\setlist[itemize]{leftmargin=*,itemsep=3pt,topsep=5pt}

\hypersetup{
  pdftitle={The Router Within: Eliciting Native Skill Routing from a Frozen LLM},
  pdfauthor={Ruishuo Chen, Xun Wang, Yu Chen, Zhuoran Li, and Longbo Huang},
  pdfsubject={Gavel: skill routing from a frozen language model},
  pdfkeywords={large language models, agents, skills, routing, Gavel},
  bookmarksnumbered=true
}

\usepackage{wrapfig}
\usepackage[ruled,vlined,linesnumbered]{algorithm2e}
\usepackage{subcaption}
\usepackage{array}
\usepackage{graphicx}
\usepackage{makecell}

\title{Towards Understanding the Training of GFlowNets: A Theoretical Perspective}
\runningtitle{Towards Understanding the Training of GFlowNets: A Theoretical Perspective}
\date{arXiv preprint, October 2026}
\paperlogo{\includegraphics[height=1.5cm]{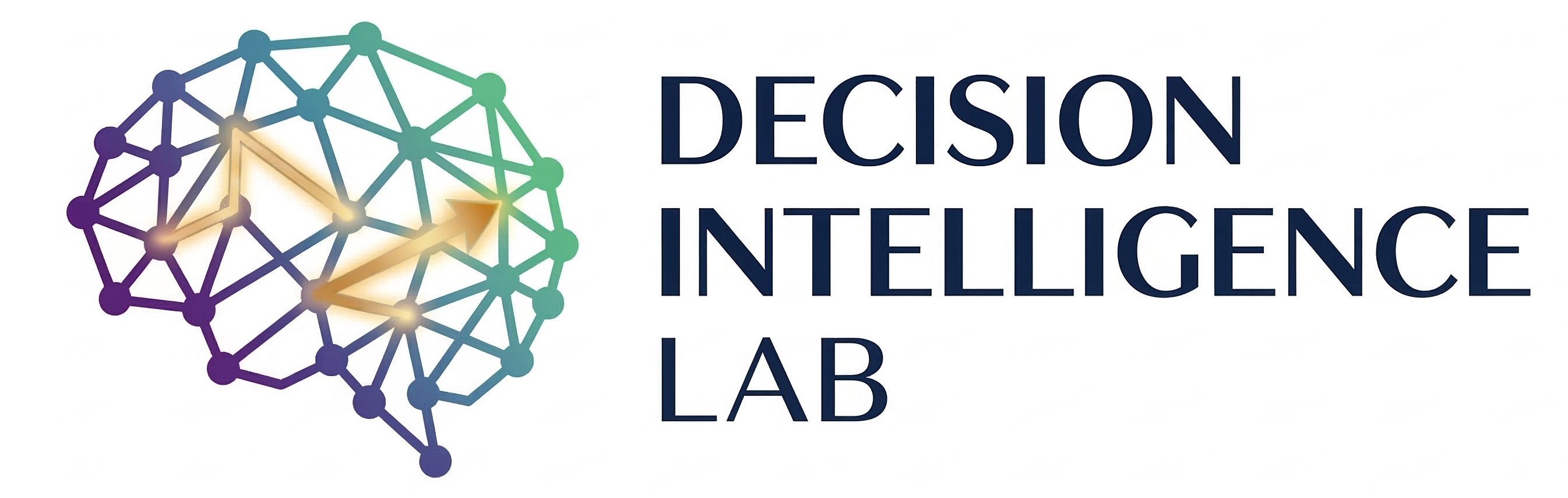}}
\author{
  Rui Hu, Yu Chen, Zhuoran Li, and
  Longbo Huang$^{\text{\faEnvelope}}$
  \\[0.3em]\normalfont
  Institute for Interdisciplinary Information Sciences, Tsinghua University
  \\
  \text{\faEnvelope}\ Correspondence: \href{mailto:longbohuang@tsinghua.edu.cn}{\texttt{longbohuang@tsinghua.edu.cn}}
}

\begin{document}
\maketitle
\thispagestyle{fancy}

\begin{abstract}
Generative Flow Networks (GFlowNets) learn policies that sample discrete objects in proportion to a positive reward. They are trained by minimizing balance objectives, whose exact satisfaction guarantees correct sampling. However, a comprehensive theory linking nonzero balance loss and approximate stationarity to sampling accuracy, especially under evolving sampling distributions and backward policies, is still lacking. To bridge this gap, we develop a unified theory for Flow Matching, Detailed Balance, and Trajectory Balance in this regime. We first bound terminal symmetric KL by the square root of the expected balance loss. Then, we show that approximate stationarity controls this distribution error up to a tangent-space residual measuring parameterization incompleteness. Finally, we establish finite-time convergence under fixed or controlled sampling and fixed, balance-trained, or separately updated backward policies. Under standard smoothness assumptions, with fixed or controlled sampling and a fixed or balance-trained backward policy, the expected loss is $O(\bar\epsilon^2+T^{-1})$ and expected terminal symmetric KL is $O(\bar\epsilon+T^{-1/2})$ under uniform coverage. For a separately updated backward policy, we characterize the sufficient update decay to approach the residual neighborhood and preserve these rates. These results connect balance objectives to sampling accuracy and clarify how parameterization, coverage, and moving components govern GFlowNet training.
\end{abstract}

\section{Introduction}

%Many important scientific and engineering problems require more than finding a single high-scoring object: one often needs a diverse collection of candidates that captures several distinct modes of a complex design space.  Generative Flow Networks (GFlowNets; \citealt{bengio2021flow,bengio2023gflownet}) address this need by learning to sample an object with probability proportional to a positive reward.  High-reward objects are therefore sampled frequently, while different high-reward regions retain probability mass. This combination of quality and diversity has made GFlowNets useful in important applications such as molecule generation \citep{zhu2023sample, koziarski2024rgfn}, combinatorial optimization \citep{zhang2023let}, probabilistic inference \citep{manta2023gflownets, deleu2024discrete}, text generation \citep{hu2024amortizing}, and image generation \citep{liu2025efficient, yun2025learning}.

Generative Flow Networks (GFlowNets, \citet{bengio2021flow, bengio2023gflownet}) have emerged as a promising framework for training generative models that sample objects with probabilities proportional to a positive reward. High-reward objects are therefore sampled frequently, while different high-reward regions retain probability mass. This combination of quality and diversity has made GFlowNets useful in various applications such as molecule generation \citep{zhu2023sample, koziarski2024rgfn}, combinatorial optimization \citep{zhang2023let}, probabilistic inference \citep{manta2023gflownets, deleu2024discrete}, text generation \citep{hu2024amortizing}, and image generation \citep{liu2025efficient, yun2025learning}.

GFlowNets transform the sampling problem into a sequential generation process on a directed acyclic graph (DAG). Starting from an initial state, a forward policy constructs an object through a series of local decisions until it reaches a terminal state. The central idea is to view probability mass as flow through the DAG. If the flow is conserved at intermediate states and the terminal flow matches the reward, then the forward policy samples terminal states from the desired reward-proportional distribution. This global property can be enforced through local or trajectory-level balance conditions, leading to the standard Flow Matching (FM, \citealt{bengio2021flow}), Detailed Balance (DB, \citealt{bengio2023gflownet}), and Trajectory Balance (TB, \citealt{malkin2022trajectory}) objectives.

This elegant construction explains what happens when every balance condition is satisfied exactly, but it does not explain what happens during practical training. A parameterized model may attain only a small nonzero loss or approach a stationary point rather than a global minimizer. Moreover, the states, edges, or trajectories used for training may be drawn from an off-policy and time-varying distribution, while the backward policy may be fixed, trained jointly with the balance loss, or updated using a separate criterion. These choices make the objective seen by stochastic gradient descent (SGD) evolve over time. Consequently, to understand GFlowNet training, a systematic study is required to connect loss and stationarity to terminal accuracy while accounting for these moving components. This motivates three essential questions:
\begin{enumerate}
    \item \textit{Does a small balance loss guarantee that the learned terminal distribution is close to the reward-proportional target?}
    \item \textit{When optimization reaches an approximate stationary point, how good is the resulting sampler, and what limits its accuracy?}
    \item \textit{As sampling strategies and backward policies are adapted during training, when can learning still approach the target distribution, and at what rate?}
\end{enumerate}

% Existing studies provide complementary pieces of this picture.  For example, one line of work studies generalization for TB-GFlowNets and DB-GFlowNets \citep{krichel2024on, silva2025generalization}, while another examines the sensitivity of the induced distribution to flow imbalances \citep{silva2025when}.  Other analyses connect GFlowNets to entropy-regularized reinforcement learning \citep{tiapkin2024generative} or hierarchical variational inference \citep{malkin2023gflownets}. The most relevant work is a recent study by \citet{yu2025secrets}, which establishes local convergence rates for the gradients of modified FM and DB objectives, as well as a sample complexity bound for the modified TB objective. Although these perspectives provide useful information, they do not create a coherent path linking the commonly optimized FM, DB, and TB losses to terminal distribution accuracy, stationary point quality, and finite-time training guarantees under changing sampling and backward policies. 

Existing studies provide complementary pieces of this picture. Flow-imbalance analyses \citep{silva2025when} examine how balance violations affect the generated distribution, while generalization analyses \citep{krichel2024on,silva2025generalization} study learning from limited observations. Connections to variational inference and entropy-regularized reinforcement learning \citep{malkin2023gflownets,tiapkin2024generative}, together with advances in loss design \citep{hu2025beyond,fawkes2026trajectory}, relate training objectives to distribution discrepancies. However, these relationships do not by themselves connect balance objectives to sampling accuracy. \citet{yu2025secrets} establishes gradient-convergence results for FM and DB, as well as sample-complexity guarantees for TB. However, the objectives in their analysis differ from the commonly used squared log-ratio loss and do not clarify the quality of parameterized stationary points or the behavior of stochastic training under varying sampling and backward policies. Very recently, \citet{lei2026stable} derived total variation guarantees from a uniform residual bound, providing a quantitative measure of sampling accuracy based on balance loss. A comprehensive literature review is available in Appendix~\ref{app:related-work}.

To answer the above questions, we develop a unified theory for FM, DB, and TB that connects balance loss, stationary-point quality, and finite-time sampling accuracy. Our theoretical results are summarized as follows:
\begin{enumerate}
    \item \textbf{Balance loss controls sampling accuracy.}
    For FM, DB, and TB, we bound the terminal symmetric KL by the square root of the expected balance loss, with explicit occupancy-mass and coverage factors. Thus a small loss is informative when the training samples cover the states, edges, or trajectories on which the forward and backward flows disagree.

    \item \textbf{Stationarity implies approximate optimality.}
    We decompose the log-flow imbalance into a gradient-visible tangent component and an orthogonal parameterization residual. This yields an approximate gradient-domination property in which the terminal distribution error is controlled by the gradient norm and a precisely defined parameterization-incompleteness term. Thus, approximate stationarity leads to a nearly accurate sampler.

    \item \textbf{Finite-time convergence with moving training components.}
    We analyze stochastic GFlowNet training when the sampling distribution is fixed or changes in a controlled manner and when the backward policy is fixed, jointly optimized by the balance loss, or updated by a general rule. With fixed or controlled sampling and a fixed or balance-trained backward policy, the expected balance loss is bounded by $O(\bar\epsilon^2+T^{-1})$, which gives expected terminal symmetric KL of $O(\bar\epsilon+T^{-1/2})$ under uniform coverage. For a separately updated backward policy, we characterize how fast its update must decay to ensure convergence and to preserve these rates.
\end{enumerate}

The remainder of the paper is organized as follows. Section~\ref{sec:preliminaries} introduces FM, DB, and TB in a unified framework. Section~\ref{sec:objectives} relates their expected balance losses to terminal-distribution mismatch. Section~\ref{sec:stationary} establishes the stationarity-to-optimality guarantee, and Section~\ref{sec:convergence-rates} gives the finite-time convergence analysis. Additionally, we extend the analysis to Sub-Trajectory Balance in Appendix~\ref{app:subtb}.

% Together, these results explain when optimization progress certifies sampling progress and quantify how coverage, parameterization, and update stability govern that connection. Sections~\ref{sec:preliminaries}--\ref{sec:convergence-rates} present the unified formulation and the three main guarantees in order. Appendix~\ref{app:subtb} extends the analysis to Sub-Trajectory Balance and discusses its relation to forward-looking parameterizations.

\section{Preliminaries}
\label{sec:preliminaries}

Let $\mathcal X$ be a discrete space with positive reward $R:\mathcal X\to\mathbb R_{>0}$. GFlowNet aims to sample $x\in\mathcal X$ with probability $R(x)/Z$, where $Z:=\sum_{x\in\mathcal X}R(x)$ is assumed to be finite. The generation takes place on a DAG $\mathcal G=(\mathcal S,\mathcal E)$ with a unique initial state $s_0$. Every state is reachable from $s_0$, and the set of terminal states coincides with $\mathcal X$.  We write $\mathcal T$ for the set of complete trajectories $\tau=(s_0,s_1,\ldots,s_{n(\tau)})$, where $n(\tau)$ is the number of edges and $s_{n(\tau)}\in\mathcal X$. 

A \textbf{flow network} is specified by a positive \textbf{edge flow} function $F:\mathcal E\to\mathbb R_{>0}$ satisfying flow conservation at every nonterminal state other than $s_0$:
\[
    \sum_{s':(s'\to s)\in\mathcal{E}}F(s'\to s)=\sum_{s'':(s\to s'')\in\mathcal{E}}F(s\to s'').
\]
Its \textbf{state flow} $F(s)$ is defined as the total flow going through state $s$. The network is \textbf{balanced} when $F(x)=R(x)$ for every terminal state, in which case $F(s_0)=Z$ and the induced terminal distribution is $F(x)/F(s_0)=R(x)/Z$.

A flow network induces a \textbf{forward policy} and a \textbf{backward policy} through
\[
    P_F(s'\mid s):=\frac{F(s\to s')}{F(s)}, \qquad
    P_B(s\mid s'):=\frac{F(s\to s')}{F(s')}.
\]
% During training, parameterized flows need not satisfy conservation. The following objectives penalize these imbalances while still defining normalized policies. All parameterized flows and policy probabilities on allowed edges are strictly positive and differentiable in their parameters. We consider Flow Matching (FM), Detailed Balance (DB), and Trajectory Balance (TB).

GFlowNets have different variants, defined by how the flows are parameterized and the corresponding balance losses resulting from different balance conditions. In this paper, we focus on three typical types: Flow Matching (FM), Detailed Balance (DB), and Trajectory Balance (TB). %Sub-Trajectory Balance (SubTB; \citealt{madan2023learning}) interpolates between single-edge and complete-trajectory credit assignment. Appendix~\ref{app:subtb} extends all three parts of our analysis---distribution matching, stationarity, and stochastic convergence---to this objective.

Throughout the discussion, $\bm\theta\in\Theta\subseteq\mathbb R^d$ collects the parameters optimized by the balance loss, while $\bm\varphi$ denotes the parameters of a backward policy when it is specified separately. For FM, both policies are induced by the edge flows and there is no separate backward block. For TB and DB, when the backward policy is jointly trained by the balance loss, its parameters are included in $\bm\theta$; every occurrence of $P_B(\cdot;\bm\varphi)$ below is then read as $P_B(\cdot;\bm\theta)$. 

\paragraph{Flow Matching GFlowNets\citep{bengio2021flow}.}
An FM-GFN uses a positive edge flow function $F(s\to s';\bm\theta)$. For $s\in\mathcal S\setminus \{s_0\}$, define the forward flow and the backward flow as
\[
    F^{\rightarrow}(s):=\sum_{s':(s'\to s)\in\mathcal{E}}F(s'\to s;\bm{\theta}),\quad
    F^{\leftarrow}(s):=\begin{cases}
        R(s) & , s\in\mathcal{X}\\
        \sum_{s'':(s\to s'')\in\mathcal{E}}F(s\to s'';\bm{\theta}) & , \text{otherwise}
    \end{cases},
\]
By extending the definition of $F^{\leftarrow}(s)$ to the initial state $s_0$, these edge flows induce normalized policies even before the balance conditions hold:
\[
    P_F(s'\mid s)=\frac{F(s\to s';\bm\theta)}{F^{\leftarrow}(s)}, \qquad
    P_B(s\mid s')=\frac{F(s\to s';\bm\theta)}{F^{\rightarrow}(s')}.
\]
FM enforces the balance condition $F^{\rightarrow}(s)=F^{\leftarrow}(s)$ on every non-initial state $s\neq s_0$ using the pointwise loss
\[
    L_{\mathrm{FM}}(s):=\frac{1}{2}\left(\log\frac{F^{\rightarrow}(s)}{F^{\leftarrow}(s)}\right)^2.
\]

\paragraph{Detailed Balance GFlowNets\citep{bengio2023gflownet}.}
A DB-GFN uses a positive state flow function $F(s;\bm\theta)$ on nonterminal states, a forward policy $P_F(s'\mid s;\bm\theta)$, and a backward policy $P_B(s\mid s';\bm\varphi)$.  For $(s\to s')\in\mathcal E$, define the forward flow and the backward flow as
\[
    F^{\rightarrow}(s\to s'):=F(s;\bm{\theta})P_F(s'|s;\bm{\theta}),\quad
    F^{\leftarrow}(s\to s'):=\begin{cases}
        R(s')P_B(s|s';\bm{\varphi}) & , s'\in\mathcal{X}\\
        F(s';\bm{\theta})P_B(s|s';\bm{\varphi}) & , \text{otherwise}
    \end{cases},
\]
DB trains their equality on every edge using the pointwise loss
\[
    L_{\mathrm{DB}}(s\to s'):=\frac{1}{2}\left(\log\frac{F^{\rightarrow}(s\to s')}{F^{\leftarrow}(s\to s')}\right)^2.
\]

\paragraph{Trajectory Balance GFlowNets\citep{malkin2022trajectory}.}
A TB-GFN uses a positive estimated total flow $\widehat Z(\bm\theta)$, a forward policy $P_F(s'\mid s;\bm\theta)$, and a backward policy $P_B(s\mid s';\bm\varphi)$.  For $\tau=(s_0,s_1,\ldots,s_{n(\tau)}) \in\mathcal T$, define the forward flow and the backward flow as
\[
    F^{\rightarrow}(\tau):=\widehat{Z}(\bm{\theta})\prod_{t=1}^{n(\tau)}P_F(s_t|s_{t-1};\bm{\theta}),\quad
    F^{\leftarrow}(\tau):=R(s_{n(\tau)})\prod_{t=1}^{n(\tau)}P_B(s_{t-1}|s_t;\bm{\varphi}),
\]
TB enforces their equality on every complete trajectory using the pointwise loss
\[
    L_{\mathrm{TB}}(\tau):=\frac{1}{2}\left(\log\frac{F^{\rightarrow}(\tau)}{F^{\leftarrow}(\tau)}\right)^2.
\]

\paragraph{Sampling and Resampling Strategies.}
In practice, the losses and their gradients are estimated from sampled and possibly reweighted states, edges, or trajectories. We denote the resulting distribution by $\mu_{\bm\omega}$, where $\bm\omega$ collects the related parameters. It may depend on the forward policy, the backward policy, or other hyperparameters and can therefore evolve during training.

\paragraph{Backward Policy.} For DB-GFN and TB-GFN, the backward policy $P_B(s\mid s';\bm\varphi)$ can be chosen freely: any positive policy on the incoming edges gives compatible flows and a forward policy by propagating the terminal rewards backward. In practice, $P_B$ is commonly fixed to be uniform or trained jointly using the TB or DB loss. Some recent work \citep{jang2024pessimistic, gritsaev2025optimizing} also studies separate backward-policy criteria designed to improve exploration or exploitation.

\paragraph{Unified Framework\citep{hu2025beyond}.}

For the unified analysis, let $\mathcal O$ denote the relevant training-object space: $\mathcal S\setminus\{s_0\}$ for FM, $\mathcal E$ for DB, and $\mathcal T$ for TB. Write $\mu=\mu_{\bm\omega}$ for a normalized training distribution on $\mathcal O$.
Let $L_2(\mu)$ be the space of real-valued functions on $\mathcal O$ with finite second moment, identified
when they agree $\mu$-almost surely, with
\[
    \langle f,g\rangle_\mu:=\sum_{o\in\mathcal O}\mu(o)f(o)g(o), \qquad
    \|f\|_\mu:=\langle f,f\rangle_\mu^{1/2}.
\]
Denote the \textbf{log-flow imbalance} as
$h_{\bm\theta}(o):=\log\frac{F^{\rightarrow}(o)}{F^{\leftarrow}(o)}$.
The overall objective can be written as
\[
    \mathcal{L}_{\mathcal{O}}(\bm{\theta})
    :=\frac{1}{2}\left\|h_{\bm\theta}\right\|_{\mu}^2
    =\frac{1}{2}\mathbb E_{o\sim\mu}[h_{\bm\theta}(o)^2].
\]

For gradient-based analysis, $h_{\bm\theta}$ and its $\bm\theta$-derivatives are square-integrable under $\mu$, and differentiation may be interchanged with the loss summations in each argument being differentiated; sufficient conditions are given in Appendix~\ref{app:convergence-regularity}. % A $\bm\theta$-derivative holds $\mu$ and any separate $\bm\varphi$ fixed, even when they change between updates.

% When differentiating with respect to $\bm\theta$, we hold $\bm\omega$ and any separate $\bm\varphi$ fixed. We suppress these external arguments until Section~\ref{sec:convergence-rates}. Even if sampling follows the current policy, a balance SGD step differentiates the pointwise loss, not the sampling probabilities; the sampling distribution can change between steps.

\section{From Balance Loss to Distribution Matching}

\label{sec:objectives}

This section connects the balance objectives to the ultimate sampling goal. We first lift the comparison between terminal distributions to complete trajectories and then express the resulting trajectory discrepancy as a weighted sum of the log-flow imbalance $h_{\bm\theta}$.  This yields a bound in terms of the expected training loss and the coverage provided by $\mu$. The proofs are provided in Appendix~\ref{app:proof_4}.

For $\tau=(s_0,s_1,\ldots,s_{n(\tau)})\in\mathcal T$, define the forward and backward trajectory distributions by
\[
    p_F(\tau):=\prod_{t=1}^{n(\tau)} P_F(s_t|s_{t-1}),\qquad
    p_B(\tau):=\frac{R(s_{n(\tau)})}{Z}\prod_{t=1}^{n(\tau)} P_B(s_{t-1}|s_t),
\]
where parameter arguments are suppressed.  The backward trajectory distribution first samples a terminal state from the reward-proportional target and then follows the backward policy to the initial state. To avoid an endless generation process, we assume that both $\mathbb{E}_{p_F}[n(\tau)]$ and $\mathbb{E}_{p_B}[n(\tau)]$ are finite.
For FM and DB on infinite DAGs, we also assume that $\sum_{o\in\tau}|h_{\bm\theta}(o)|$ is integrable under both trajectory laws; see Appendix~\ref{app:proof_4}.
Their terminal marginals are
\[
    p_F^\top(x):=\sum_{\tau:s_{n(\tau)}=x}p_F(\tau),\qquad
    p_R(x):=\frac{R(x)}{Z}=\sum_{\tau:s_{n(\tau)}=x}p_B(\tau),
    \qquad x\in\mathcal X.
\]
The chain rule of KL divergence implies the following lemma.
\begin{lemma}
    \label{lemma:KL_bound}
    Let $p_F$, $p_B$, $p_F^\top$ and $p_R$ be defined as above. We have
    \[
        \mathrm{KL}(p_F^\top\|p_R)\leq \mathrm{KL}(p_F\|p_B),\quad \mathrm{KL}(p_R\|p_F^\top)\leq \mathrm{KL}(p_B\|p_F).
    \]
\end{lemma}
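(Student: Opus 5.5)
The plan is to view the terminal state as a deterministic function of the trajectory and apply the chain rule of KL divergence (equivalently, data processing under the map $\tau\mapsto s_{n(\tau)}$). Both trajectory laws $p_F$ and $p_B$ live on the countable set $\mathcal T$. Their pushforwards under $\tau\mapsto x=s_{n(\tau)}$ are $p_F^\top$ and $p_R$ by the stated marginal formulas. Each law therefore factors as the marginal times a conditional on $\mathcal T_x:=\{\tau: s_{n(\tau)}=x\}$:
\[
p_F(\tau)=p_F^\top(x)\,p_F(\tau\mid x),\qquad p_B(\tau)=p_R(x)\,p_B(\tau\mid x).
\]
Here $p_F(\tau\mid x)=p_F(\tau)/p_F^\top(x)$ whenever $p_F^\top(x)>0$, and similarly for $p_B$. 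Positivity of $R$ gives $p_R(x)>0$ for all $x$. Positivity of the policies on allowed edges, together with reachability, gives $p_F^\top(x)>0$ as well, so the conditionals are well defined everywhere.

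For the first inequality I would write
\[
\mathrm{KL}(p_F\|p_B)=\sum_x p_F^\top(x)\log\frac{p_F^\top(x)}{p_R(x)}+\sum_x p_F^\top(x)\,\mathrm{KL}\bigl(p_F(\cdot\mid x)\,\|\,p_B(\cdot\mid x)\bigr).
\]
The second sum is nonnegative by Gibbs' inequality on each fiber $\mathcal T_x$, which yields $\mathrm{KL}(p_F^\top\|p_R)\le\mathrm{KL}(p_F\|p_B)$. The second inequality follows by the symmetric argument with the roles of $p_F$ and $p_B$ exchanged, weighting the fibers by $p_R(x)$.

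The only delicate point is justifying the splitting of the logarithm on a possibly infinite DAG, where $\mathcal X$ and $\mathcal T$ may be countably infinite. Summing $\log(a/b)$ term by term could otherwise produce $\infty-\infty$. To avoid this I would argue via nonnegative terms. If $\mathrm{KL}(p_F\|p_B)=\infty$ there is nothing to prove. Otherwise, I would use the log-sum inequality on each fiber, $\sum_{\tau\in\mathcal T_x}p_F(\tau)\log\frac{p_F(\tau)}{p_B(\tau)}\ge p_F^\top(x)\log\frac{p_F^\top(x)}{p_R(x)}$, which holds for countable sums with finite total mass. I would then sum over $x$. Both sides are sums of terms $a\log(a/b)$ of probability measures, whose negative parts are summable since $a\log(a/b)\ge a-b$. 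This permits the rearrangement and gives the claim without any appeal to the finiteness assumptions on $\mathbb E[n(\tau)]$.
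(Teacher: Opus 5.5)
Your proposal is correct and matches the paper's proof. The paper likewise applies the chain rule of relative entropy to the terminal-state map $\tau\mapsto s_{n(\tau)}$, drops the nonnegative averaged conditional KL term, and runs the same argument with fibers weighted by $p_R$ for the reverse inequality; your fiberwise log-sum argument with summable negative parts is a more careful justification of that decomposition on a countably infinite $\mathcal T$, which the paper leaves implicit.
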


For a training object $o\in\mathcal O$, define its forward and backward occupancy weights by
\[
    \nu_F(o):=\sum_{\tau\in\mathcal T:o\in\tau}p_F(\tau), \qquad
    \nu_B(o):=\sum_{\tau\in\mathcal T:o\in\tau}p_B(\tau),
\]
where, for TB, $o\in\tau$ means $o=\tau$.
% Let
% \[
%     c_\nu:=\sum_{o\in\mathcal O}|\nu_F(o)-\nu_B(o)|.
% \]
% Note that for state and edge objectives, the total mass of each occupancy measure is the expected number of corresponding objects along a trajectory, while for trajectories, the total mass is one. Thus $c_\nu$ is at most the sum of these two total masses.
% When $c_\nu>0$, define the discrepancy distribution as
% \[
%     \nu(o):=\frac{|\nu_F(o)-\nu_B(o)|}{c_\nu}.
% \]
Define the discrepancy distribution as
\[
    \nu(o):=\frac{|\nu_F(o)-\nu_B(o)|}{c_\nu},\quad \text{ where }c_\nu=\sum_{o\in\mathcal O}|\nu_F(o)-\nu_B(o)|.
\]
Note that for state and edge objectives, the total mass of each occupancy measure is the expected number of corresponding objects along a trajectory, while for trajectories, the total mass is one. Thus $c_\nu$ is at most the sum of these two total masses.

To connect trajectory-level KL to balance objectives, we decompose the log-density ratio on trajectories into accumulated local imbalances plus a term that is constant across trajectories.  Symmetrizing the KL divergence cancels this constant and leaves only the imbalances that the training objective controls, yielding the following theorem.

\begin{theorem}
    \label{thm:upper_bound_loss}
    If $c_\nu=0$, then
    $\mathrm{KL}(p_F^\top\|p_R)+\mathrm{KL}(p_R\|p_F^\top)=0$.
    If $c_\nu>0$, then
    \[
        \mathrm{KL}(p_F^\top\|p_R)+\mathrm{KL}(p_R\|p_F^\top)
        % &\leq\sum_{o\in\mathcal O}\left(\nu_F(o)-\nu_B(o)\right)h_{\bm\theta}(o)\\
        \leq c_\nu\sqrt{2\left\|\frac{\nu}{\mu}\right\|_\infty\mathcal{L}_{\mathcal{O}}(\bm{\theta})},
        \quad \text{where } \left\|\frac{\nu}{\mu}\right\|_\infty:=\sup_{o\in\mathcal O}\frac{\nu(o)}{\mu(o)}.
    \]
\end{theorem}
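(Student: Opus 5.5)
By Lemma~\ref{lemma:KL_bound}, it suffices to bound the trajectory-level symmetric divergence
\[
\mathrm{KL}(p_F\|p_B)+\mathrm{KL}(p_B\|p_F)=\mathbb E_{p_F}\Bigl[\log\tfrac{p_F}{p_B}\Bigr]-\mathbb E_{p_B}\Bigl[\log\tfrac{p_F}{p_B}\Bigr].
\]
The first step is to show that, for each variant, the log-ratio decomposes as
\[
\log\frac{p_F(\tau)}{p_B(\tau)}=\sum_{o\in\tau}h_{\bm\theta}(o)+C,
\]
where $C$ is constant in $\tau$.

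For TB this is immediate: $p_F(\tau)/p_B(\tau)=F^{\rightarrow}(\tau)Z/(\widehat Z F^{\leftarrow}(\tau))$, so $C=\log(Z/\widehat Z)$.

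For DB, write $P_F(s_t|s_{t-1})=F^{\rightarrow}(s_{t-1}\to s_t)/F(s_{t-1})$ and $P_B(s_{t-1}|s_t)=F^{\leftarrow}(s_{t-1}\to s_t)/F(s_t)$, with $F(x)$ read as $R(x)$ at terminals. The product then telescopes to $\prod_t e^{h(s_{t-1}\to s_t)}\cdot F(s_{n})/F(s_0)$. Multiplying by $Z/R(s_n)$ leaves $C=\log(Z/F(s_0;\bm\theta))$.

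For FM, $P_F(s_t|s_{t-1})/P_B(s_{t-1}|s_t)=F^{\rightarrow}(s_t)/F^{\leftarrow}(s_{t-1})$. Regrouping the telescoping product gives $\prod_{t\ge1}F^{\rightarrow}(s_t)/F^{\leftarrow}(s_t)$ times $F^{\leftarrow}(s_n)/F^{\leftarrow}(s_0)=R(s_n)/F^{\leftarrow}(s_0)$. This produces the sum of $h$ over non-initial states, with $C=\log(Z/F^{\leftarrow}(s_0))$.

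Taking the difference of the two expectations cancels $C$. Exchanging sums by Fubini, justified by the integrability assumption on $\sum_{o\in\tau}|h|$ and on the finite expected lengths, gives
\[
\mathrm{KL}(p_F\|p_B)+\mathrm{KL}(p_B\|p_F)=\sum_{o\in\mathcal O}(\nu_F(o)-\nu_B(o))\,h_{\bm\theta}(o).
\]
The main technical care goes here. Each expectation of $\log(p_F/p_B)$ must be finite so that the subtraction is legitimate; this follows from the decomposition together with integrability. If $c_\nu=0$, the right-hand side vanishes, and since both KLs are nonnegative, both terminal KLs are zero.

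If $c_\nu>0$, bound the right-hand side by $c_\nu\sum_o\nu(o)|h(o)|$ and apply Cauchy--Schwarz:
\[
\sum_o\nu|h|\le\Bigl(\sum_o\nu h^2\Bigr)^{1/2}\le\Bigl(\Bigl\|\tfrac{\nu}{\mu}\Bigr\|_\infty\sum_o\mu h^2\Bigr)^{1/2}=\sqrt{2\Bigl\|\tfrac{\nu}{\mu}\Bigr\|_\infty\mathcal L_{\mathcal O}(\bm\theta)}.
\]
If the sup is infinite the bound is trivial, and points with $\mu(o)=0<\nu(o)$ make it infinite. Combining this with the lemma completes the proof.
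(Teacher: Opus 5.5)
Your proposal follows the paper's proof essentially step for step: Lemma~\ref{lemma:KL_bound}, the same telescoping decompositions with the same trajectory-independent constants, cancellation of that constant, Fubini under the FM/DB integrability condition, and Cauchy--Schwarz with a change of measure. One technical point concerns TB, for which the paper assumes no integrability of $|h_{\bm\theta}(\tau)|$ under $p_F$ or $p_B$: instead of splitting the symmetric KL into two separately finite expectations, write it as $\sum_\tau(p_F(\tau)-p_B(\tau))\log\frac{p_F(\tau)}{p_B(\tau)}$ (termwise nonnegative, hence always well defined), and note that in the nonvacuous case $\sum_\tau|p_F(\tau)-p_B(\tau)|\,|h_{\bm\theta}(\tau)|=c_\nu\mathbb E_\nu|h_{\bm\theta}|<\infty$, which is exactly how the paper lets the constant $\log(Z/\widehat Z)$ cancel.
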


The ratio $\|\nu/\mu\|_\infty$ represents the coverage of the discrepancy distribution by the sampling distribution. We set $\nu(o)/\mu(o)=+\infty$ if $\nu(o)>0=\mu(o)$ and set it to zero if $\nu(o)=\mu(o)=0$.  A finite ratio automatically implies that $\mu$ covers the support of $\nu$; otherwise the bound is valid but vacuous. Although $\nu$ is infeasible in practice, the training objective is effective when $\mu$ provides good coverage. In particular, a bounded coverage ratio and an expected loss of $\delta$ yield an $O(\sqrt{\delta})$ bound on terminal symmetric KL.

Trajectory comparisons also underlie the connection between GFlowNets and variational inference \citep{malkin2023gflownets}. Our coverage factor parallels distribution-mismatch coefficients in RL \citep{agarwal2021theory}, where the relevant measure is the difference between forward and backward occupancies. For approximate balance, \citet{lei2026stable} derive total-variation guarantees from uniform residual bounds. Theorem~\ref{thm:upper_bound_loss} instead controls terminal symmetric KL through the expected squared log-ratio loss under the training measure. A detailed comparison is provided in Appendix~\ref{app:related-gfn-theory}.

% The detailed proofs of Lemma \ref{lemma:KL_bound} and Theorem \ref{thm:upper_bound_loss} are provided in Appendix \ref{app:proof_4}.

\section{From Stationarity to Optimality}
\label{sec:stationary}

% Theorem~\ref{thm:upper_bound_loss} relates distribution mismatch to log-flow imbalance. We now separate the imbalance into its projection onto the local tangent space and an orthogonal residual. The loss gradient controls the projected component through an effective Fisher matrix. Bounding the residual then connects approximate stationarity to terminal-distribution accuracy.

Theorem~\ref{thm:upper_bound_loss} connects distribution error to log-flow imbalance and hence the balance loss. To understand what remains when optimization only reaches a stationary point, we further show that this imbalance can be decomposed into two parts: its projection onto the local tangent space and an orthogonal residual. The loss gradient controls the projected component through an effective Fisher matrix. The orthogonal residual, however, is invisible to first-order optimization, which can be interpreted as a measure of parameterization incompleteness. Therefore, this decomposition offers an approximate stationarity-to-optimality guarantee whose irreducible error is the parameterization-incompleteness term. Below we present the main results, while the proofs are available in Appendix~\ref{app:proof_5}.

For the differentiable imbalance family introduced in Section~\ref{sec:preliminaries}, define the imbalance gradient feature (called the score function below)
\[
    \psi_{\bm\theta}(o):=\nabla_{\bm\theta}h_{\bm\theta}(o).
\]
At a given $\bm\theta$, the score features span the local tangent space
\[
    \mathsf T_{\bm\theta}:=\left\{\bm u^\top\psi_{\bm\theta}(\cdot):\bm u\in\mathbb R^d\right\}.
\]
Thus $h_{\bm\theta}+\mathsf T_{\bm\theta}$ is the local affine model of the imbalance. The distance from this affine space to zero tells us the residual after the best correction in this linearized model.

\begin{definition}[Parameterization Incompleteness]
\label{def:local-residual}
Let $\Pi_{\mathsf T_{\bm\theta}}^\mu$ denote the orthogonal projection in $L_2(\mu)$ onto $\mathsf T_{\bm\theta}$.  Define the tangent residual by
\[
    r_{\mu,\bm\theta}:=\left(I-\Pi_{\mathsf T_{\bm\theta}}^\mu\right)h_{\bm\theta}.
\]
Its parameterization incompleteness under the fixed training measure $\mu$ is
\[
    \epsilon_\mu:=\sup_{\bm\theta\in\Theta}\|r_{\mu,\bm\theta}\|_\mu
    =\sup_{\bm\theta\in\Theta}\min_{\bm u\in\mathbb R^d}\left\|h_{\bm\theta}-\bm u^\top\psi_{\bm\theta}\right\|_\mu
    \in[0,+\infty].
\]
\end{definition}

The residual is orthogonal to every score feature and hence does not appear in the loss gradient. The quantity $\epsilon_\mu$ bounds its norm over the parameter domain. A small value limits the imbalance that first-order optimization cannot detect under $\mu$.

\begin{proposition}[Tabular Zero Residual]
\label{prop:tabular-zero-residual}
On a finite DAG, the standard positive tabular models of FM, DB, and TB satisfy
\[
    h_{\bm\theta}\in\mathsf T_{\bm\theta}\quad\text{for every }\bm\theta\in\Theta.
\]
Consequently, for every sampling measure $\mu$,
\[
    r_{\mu,\bm\theta}=0\quad\text{in }L_2(\mu),\qquad\epsilon_\mu=0.
\]
\end{proposition}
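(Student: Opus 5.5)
The plan is to exhibit, for each of the three tabular models, an explicit direction $\bm u(\bm\theta)\in\mathbb R^d$ with $\bm u^\top\psi_{\bm\theta}(o)=h_{\bm\theta}(o)$ for every $o\in\mathcal O$. The natural tool is homogeneity of $h_{\bm\theta}$ in log-coordinates. \emph{Standard positive tabular model} is read here as parameterizing every positive quantity by its logarithm: log edge flows for FM; log state flows and unnormalized logits of $P_F$ (and of $P_B$ if jointly trained) for DB; and $\log\widehat Z$ plus logits for TB. Once $h_{\bm\theta}\in\mathsf T_{\bm\theta}$ pointwise, it lies in $\mathsf T_{\bm\theta}$ as an element of $L_2(\mu)$ for every $\mu$. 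Hence $(I-\Pi^\mu_{\mathsf T_{\bm\theta}})h_{\bm\theta}=0$, and taking the supremum over $\bm\theta$ gives $\epsilon_\mu=0$. Finiteness of the DAG guarantees that $d$ is finite and all sums are finite, so square-integrability and differentiation under the sum are automatic.

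\textbf{TB and DB.} I would use a scaling direction that acts only on the free flow parameters. For TB with $\bm\theta\ni z=\log\widehat Z$, the imbalance is $h(\tau)=z+\sum_t\log P_F-\log R-\sum_t\log P_B$. Taking $\bm u=h$-independent is impossible, so instead I would use the tabular freedom fully: $\psi(\tau)$ contains the indicator coordinates $\partial_z h=1$ and the logit derivatives. Rather than scaling, I would argue that the map $\bm\theta\mapsto h_{\bm\theta}$ has Jacobian whose range equals all functions on $\mathcal T$ reachable as $h_{\bm\theta'}-h_{\bm\theta}$ to first order. Concretely, $h_{\bm\theta}$ vanishes at the balanced tabular parameter $\bm\theta^\star$; the plan is to show that the linear span of the $\psi$'s at $\bm\theta$ contains $h_{\bm\theta}$.

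For DB this is direct. Consider the function $g(s\to s')=\log F(s)+\log P_F(s'|s)-\log F(s')-\log P_B(s|s')$. The log-state-flow coordinates give $\partial h/\partial\log F(s)$ equal to the signed incidence vector, and the logit coordinates of $P_F(\cdot|s)$ give the centered indicators $\bm 1_{s\to s'}-P_F(s'|s)$. I would solve $\bm u^\top\psi=h$ edgewise. The logit perturbation of $P_F(\cdot\mid s)$ in direction $a_{s'}$ changes $h(s\to s')$ by $a_{s'}-\sum P_F a$. Choosing $a_{s'}=h(s\to s')$ reproduces $h$ up to a per-state constant $c_s=\sum_{s'}P_F(s'|s)h(s\to s')$. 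That constant is absorbed by moving $\log F(s)$ by $c_s$, which, however, also shifts the incoming edges $(\cdot\to s)$. I would remove this coupling by processing states in reverse topological order, which is possible because the DAG is finite and acyclic, so that each correction only affects edges already upstream and yet to be fixed.

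TB is handled the same way along trajectories. The constant is absorbed by $\partial_z h=1$, and the trajectory-dependent part is written through logit directions of $P_F$, backward-inducted from terminal states.

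\textbf{FM.} I would use homogeneity: $F^{\rightarrow}(s)$ is linear in the incoming edge flows, and $F^{\leftarrow}(s)$ is linear in the outgoing ones, or constant $R(s)$ at terminal states. With $\ell_e=\log F_e$, perturbing $\ell$ by $a_e$ gives $\delta h(s)=\sum_{e\in\mathrm{in}(s)}w_e a_e-\sum_{e\in\mathrm{out}(s)}w'_e a_e$, where the weights are positive and sum to one on each side. The task is to solve $\delta h=h$. I would again proceed in topological order from $s_0$: choose the incoming edge values at $s$ to hit $h(s)$ plus the already fixed outgoing contribution, which is determined later. Equivalently, set the outgoing values from the decisions made at earlier states. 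I would then verify that each edge is assigned exactly once. The intended order is forward: at state $s$, its incoming edges are all outgoing edges of earlier states, so I would instead assign outgoing edges of $s$ to correct the children.

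The main obstacle is this triangular solvability, namely ensuring that per-state constraints never conflict. I would settle it by a rank argument: the linear map $a\mapsto\delta h$ is onto $\mathbb R^{\mathcal O}$, shown by exhibiting the triangular structure in topological order. Surjectivity of the Jacobian then gives $h_{\bm\theta}\in\mathsf T_{\bm\theta}$ immediately.
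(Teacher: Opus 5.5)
Your DB and TB arguments are correct and are the paper's value/advantage construction in iterative form. In DB, you process states in reverse topological order, give the logits of $P_F(\cdot\mid s)$ the target $h(s\to s')+v(s')$ (not $h(s\to s')$ alone, once the children's corrections are in place), and absorb its $P_F$-mean into $\log F(s)$. That is exactly $V(s)=\mathbb E_{P_F}[h(s,s')+V(s')]$ and $A(s,s')=h(s,s')+V(s')-V(s)$. For TB there is no state-flow coordinate, so say explicitly that the mean at $s$ is pushed into the parent edge's logit target. The sum then telescopes and only $V(s_0)$ is left for $z$. Drop the opening detour about the Jacobian range and $h_{\bm\theta^\star}=0$: that range is in general a proper subspace of functions on $\mathcal T$, and $h$ is not linear in the logits, so neither step helps. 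Jointly trained backward coordinates simply get direction zero.

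The genuine gap is FM. You reduce it correctly to surjectivity of $a\mapsto\delta h$ and name triangular solvability as the main obstacle, but you never exhibit the triangular structure, and the orders you sketch do not work. A forward pass that sets the outgoing edges of $s$ to fix its children needs those children's own outgoing contributions, which are only set later; your phrase ``already fixed \dots\ determined later'' is circular. Moreover, the full Jacobian has more columns than rows and each edge column touches two rows. So ``triangular in topological order'' means nothing until you select which single incoming edge carries each row's correction, in particular when a state has several parents. Your first idea, choosing an incoming edge of $s$ to hit $h(s)$ plus the outgoing contribution, is the right one. It works only in reverse topological order and only after that selection.

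The missing step, which is the paper's, is this. For each non-initial $s$, fix one incoming edge $e_s=(p(s)\to s)$ and give every other edge direction zero. Column $e_s$ then has the positive entry $F(e_s)/F^{\rightarrow}(s)$ in row $s$. Its only other possible entry is $-F(e_s)/F^{\leftarrow}(p(s))$ in row $p(s)$, which precedes $s$ and is absent when $p(s)=s_0\notin\mathcal O$. This square submatrix is triangular with nonzero diagonal, hence invertible. Explicitly, in reverse topological order,
\[
a_s=\frac{F^{\rightarrow}(s)}{F(e_s)}\Bigl(h(s)+\sum_{s':\,p(s')=s}\frac{F(e_{s'})}{F^{\leftarrow}(s)}\,a_{s'}\Bigr).
\]
With this, the rest of your argument goes through: pointwise membership gives zero residual in every $L_2(\mu)$ and hence $\epsilon_\mu=0$.
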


% Proposition~\ref{prop:tabular-zero-residual} provides a concrete zero-residual benchmark. For FM, independent log-flow coordinates make the imbalance Jacobian full row rank. For DB, each edge imbalance can be written as a state-potential difference plus a centered policy advantage. TB appears less direct because there may be far more trajectories than parameters. However, a Bellman recursion again decomposes it into a telescoping value difference and centered advantages. This connects to RL advantage decompositions \citep{sutton1999policy} and potential-based telescoping \citep{ng1999policy}: here, the potential is constructed from the current imbalance to establish its tangent-space representation for both DB and TB. The coordinate constructions are given in Appendix~\ref{app:proof_5}.

\paragraph{Notes on forward-looking parameterizations.}
Forward-looking GFlowNets (FL-GFNs; \citealt{pan2023better}) incorporate a given intermediate energy into the balance equations. With the convention $R(x)=e^{-E(x)}$, their state-flow parameterization can be written as
\[
    F(s;\bm\theta)=e^{-E(s)}\widetilde F(s;\bm\theta),
    \qquad \widetilde F(x)=1\quad(x\in\mathcal X).
\]
When parameterized independently by the log-flows, FL-GFNs differ only by a fixed translation, so Proposition~\ref{prop:tabular-zero-residual} still applies.

With Proposition~\ref{prop:tabular-zero-residual}, $\epsilon_\mu$ can be viewed as a reasonable measure of parameterization incompleteness. The remainder of the section concerns a general parameterized family. We require two regularity conditions to control the tangent-space component.

\begin{assumption}[Bounded Score Function]
\label{assump:score}
There exists $M>0$ such that, for the discrepancy distribution $\nu$ from Theorem~\ref{thm:upper_bound_loss} and every $\bm\theta\in\Theta$ under consideration with $c_\nu>0$,
\[
    \mathbb E_{o\sim\nu}\left[\|\psi_{\bm\theta}(o)\|_2\right]\leq M,
\]
where $\nu$ is induced at the same $\bm\theta$.
\end{assumption}

\begin{assumption}[Effective Fisher Conditioning]
\label{assump:fisher}
Define the Fisher information as
\[
    \mathcal I_\mu(\bm\theta):=\mathbb E_{o\sim\mu}[\psi_{\bm\theta}(o)\psi_{\bm\theta}(o)^\top].
\]
There exists $\lambda>0$ such that, for every $\bm\theta\in\Theta$ under consideration,
\[
    \left\|\mathcal I_\mu(\bm\theta)^\dagger\right\|_{2\to2}\leq\frac{1}{\lambda},
\]
where $\mathcal I_\mu(\bm\theta)^\dagger$ is the Moore--Penrose pseudo-inverse.
\end{assumption}

Assumption~\ref{assump:score} controls how large a tangent direction can be on the objects emphasized by $\nu$.  Assumption~\ref{assump:fisher} requires the nonzero eigenvalues of the Fisher matrix to be uniformly separated from zero.  Unlike strict positive definiteness, it permits null directions caused by redundant coordinates and therefore accommodates over-parameterized models.  Only the effective parameter subspace matters because null directions have zero first-order effect on the imbalance in $L_2(\mu)$. They need not be invisible outside the support of $\mu$, which is why coverage remains a separate condition.

\begin{lemma}[Tangent Projection from the Gradient]
\label{lemma:local-coefficient}
Let
\[
    \bm u_\mu^*(\bm\theta):=\mathcal I_\mu(\bm\theta)^\dagger\nabla_{\bm\theta}\mathcal L_{\mathcal O}(\bm\theta).
\]
Under Assumption~\ref{assump:fisher},
\[
    \Pi_{\mathsf T_{\bm\theta}}^\mu h_{\bm\theta}={\bm u_\mu^*(\bm\theta)}^\top\psi_{\bm\theta},\qquad
    \|\bm u_\mu^*(\bm\theta)\|_2\leq\frac{1}{\lambda}\|\nabla_{\bm\theta}\mathcal L_{\mathcal O}(\bm\theta)\|_2.
\]
\end{lemma}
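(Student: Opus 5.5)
The plan is to recognize the projection onto $\mathsf T_{\bm\theta}$ as a finite-dimensional least-squares problem whose normal equations involve exactly the Fisher matrix and the loss gradient. First compute the gradient of the objective. Interchanging differentiation and summation, as permitted by the regularity conventions of Section~\ref{sec:preliminaries}, gives
\[
    \nabla_{\bm\theta}\mathcal L_{\mathcal O}(\bm\theta)=\mathbb E_{o\sim\mu}[h_{\bm\theta}(o)\psi_{\bm\theta}(o)]=:\bm b .
\]
Here $\mu$ (and any separate $\bm\varphi$) is held fixed. Its $j$-th coordinate is $\langle h_{\bm\theta},\psi_{\bm\theta,j}\rangle_\mu$.

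Next, characterize the projection. Since $\mathsf T_{\bm\theta}$ is finite dimensional, it is closed in $L_2(\mu)$, so the projection exists. It equals $\bm u^\top\psi_{\bm\theta}$ for some $\bm u$ if and only if the residual $h_{\bm\theta}-\bm u^\top\psi_{\bm\theta}$ is orthogonal to every coordinate $\psi_{\bm\theta,j}$. This orthogonality is equivalent to the normal equations $\mathcal I_\mu(\bm\theta)\bm u=\bm b$.

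I will then show that $\bm b\in\mathrm{range}(\mathcal I_\mu(\bm\theta))$. Since $\mathcal I_\mu$ is symmetric positive semidefinite, its range is the orthogonal complement of its kernel. Take $\bm v\in\ker\mathcal I_\mu$. Then $\bm v^\top\mathcal I_\mu\bm v=\|\bm v^\top\psi_{\bm\theta}\|_\mu^2=0$, so $\bm v^\top\psi_{\bm\theta}=0$ $\mu$-a.s. Hence $\bm v^\top\bm b=\langle h_{\bm\theta},\bm v^\top\psi_{\bm\theta}\rangle_\mu=0$, which places $\bm b$ in the range. The pseudo-inverse therefore gives an exact solution: $\mathcal I_\mu\mathcal I_\mu^\dagger\bm b=\bm b$, so $\bm u_\mu^*=\mathcal I_\mu^\dagger\bm b$ satisfies the normal equations. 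This yields $\Pi^\mu_{\mathsf T_{\bm\theta}}h_{\bm\theta}={\bm u_\mu^*}^\top\psi_{\bm\theta}$ in $L_2(\mu)$. Different solutions differ by kernel vectors, which vanish as functions in $L_2(\mu)$, so the identity holds regardless of which solution is chosen.

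The norm bound then follows from operator-norm submultiplicativity together with Assumption~\ref{assump:fisher}:
\[
    \|\bm u_\mu^*\|_2\le\|\mathcal I_\mu^\dagger\|_{2\to2}\,\|\bm b\|_2\le\lambda^{-1}\|\nabla_{\bm\theta}\mathcal L_{\mathcal O}\|_2 .
\]

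The main obstacle is the range argument that makes the pseudo-inverse solve the normal equations exactly in the singular, over-parameterized case, since invertibility of $\mathcal I_\mu$ is not assumed. A secondary point is checking that $\mathcal I_\mu$ and $\bm b$ are finite. This follows from square-integrability of $h_{\bm\theta}$ and $\psi_{\bm\theta}$ via Cauchy--Schwarz, which ensures the inner products above are well defined.
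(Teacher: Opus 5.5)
Your proof is correct and follows essentially the same route as the paper. The paper introduces the feature operator $\Psi_{\bm\theta}\bm u=\bm u^\top\psi_{\bm\theta}$ and cites the projection formula $\Pi^\mu_{\mathsf T_{\bm\theta}}=\Psi_{\bm\theta}\mathcal I_\mu(\bm\theta)^\dagger\Psi_{\bm\theta}^*$; it then identifies $\Psi_{\bm\theta}^*h_{\bm\theta}$ with $\nabla_{\bm\theta}\mathcal L_{\mathcal O}$ and bounds the coefficient by $\|\mathcal I_\mu^\dagger\|_{2\to2}$. Your normal-equation and kernel/range argument proves that cited formula directly in the singular, over-parameterized case instead of quoting it.
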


This projection parallels feature regression in compatible function approximation and natural policy gradient \citep{sutton1999policy,kakade2001natural}. Here the regression target is the imbalance $h_{\bm\theta}$ and the features are its derivatives $\nabla_{\bm\theta}h_{\bm\theta}$. Accordingly, the effective Fisher is the Gram matrix of imbalance features, and the residual measures local tangent approximation error. Refer to Appendix~\ref{app:related-techniques} for further discussions. Together with conditioning and coverage, this geometry links balance-loss stationarity to terminal sampling accuracy in the following theorem.

\begin{theorem}
\label{thm:stationarity_to_optimality}
Suppose Assumptions~\ref{assump:score} and~\ref{assump:fisher} hold. For every $\bm\theta\in\Theta$, if $c_\nu>0$, then
\begin{align*}
\mathrm{KL}(p_F^\top\|p_R)+\mathrm{KL}(p_R\|p_F^\top)
\leq c_\nu\left(
\sqrt{\left\|\frac{\nu}{\mu}\right\|_\infty}
\,\epsilon_\mu
+\frac{M}{\lambda}
\left\|\nabla_{\bm\theta}
\mathcal L_{\mathcal O}(\bm\theta)\right\|_2
\right).
\end{align*}
If $c_\nu=0$, the symmetric KL divergence on the left-hand side is zero.
The coverage term is understood as $+\infty$ whenever $\|\nu/\mu\|_\infty=+\infty$.
\end{theorem}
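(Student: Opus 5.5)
The plan is to reuse the intermediate inequality behind Theorem~\ref{thm:upper_bound_loss}. Before any Cauchy--Schwarz step, its proof bounds the symmetric terminal KL by the signed occupancy sum:
\[
\mathrm{KL}(p_F^\top\|p_R)+\mathrm{KL}(p_R\|p_F^\top)\le \sum_{o\in\mathcal O}(\nu_F(o)-\nu_B(o))\,h_{\bm\theta}(o)\le c_\nu\,\mathbb E_{o\sim\nu}\big[|h_{\bm\theta}(o)|\big].
\]
This comes from Lemma~\ref{lemma:KL_bound} together with the telescoping decomposition of $\log(p_F/p_B)$ into accumulated imbalances plus a constant that cancels under symmetrization. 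If $c_\nu=0$, then $\nu_F=\nu_B$ and the left side is zero, as already stated in Theorem~\ref{thm:upper_bound_loss}. So I assume $c_\nu>0$ and work with $\mathbb E_\nu|h_{\bm\theta}|$.

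Next, I split $h_{\bm\theta}$ using Definition~\ref{def:local-residual} and Lemma~\ref{lemma:local-coefficient}:
\[
h_{\bm\theta}=\bm u_\mu^*(\bm\theta)^\top\psi_{\bm\theta}+r_{\mu,\bm\theta}.
\]
A subtlety is that this identity holds in $L_2(\mu)$, i.e.\ $\mu$-a.e., while we integrate against $\nu$. To handle this, I define $r_{\mu,\bm\theta}$ pointwise as $h_{\bm\theta}-\bm u_\mu^{*\top}\psi_{\bm\theta}$ on all of $\mathcal O$. The decomposition then holds everywhere, and the $L_2(\mu)$ norm of this representative is still $\|r_{\mu,\bm\theta}\|_\mu\le\epsilon_\mu$. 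If $\|\nu/\mu\|_\infty=+\infty$, the bound is vacuous, so I assume it is finite; then $\nu\ll\mu$, and no ambiguity arises off the support of $\mu$. By the triangle inequality,
\[
\mathbb E_\nu|h_{\bm\theta}|\le \mathbb E_\nu|\bm u_\mu^{*\top}\psi_{\bm\theta}|+\mathbb E_\nu|r_{\mu,\bm\theta}|.
\]

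I bound the two terms separately.

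\textbf{Tangent term.} Cauchy--Schwarz in $\mathbb R^d$, Assumption~\ref{assump:score}, and the coefficient bound of Lemma~\ref{lemma:local-coefficient} give
\[
\mathbb E_\nu|\bm u^{*\top}\psi_{\bm\theta}|\le\|\bm u^*\|_2\,\mathbb E_\nu\|\psi_{\bm\theta}\|_2\le \frac{M}{\lambda}\|\nabla_{\bm\theta}\mathcal L_{\mathcal O}(\bm\theta)\|_2.
\]

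\textbf{Residual term.} Jensen's inequality followed by a change of measure gives
\[
\mathbb E_\nu|r|\le(\mathbb E_\nu r^2)^{1/2}=\Big(\sum_o\mu(o)\tfrac{\nu(o)}{\mu(o)}r(o)^2\Big)^{1/2}\le\sqrt{\|\nu/\mu\|_\infty}\,\|r\|_\mu\le\sqrt{\|\nu/\mu\|_\infty}\,\epsilon_\mu.
\]
This is exactly the Cauchy--Schwarz/coverage step from Theorem~\ref{thm:upper_bound_loss}, applied to $r$ instead of $h$. Multiplying the sum of the two bounds by $c_\nu$ yields the claimed inequality.

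The main obstacle is justifying the starting inequality in a form that keeps $|h|$ weighted by $\nu$, rather than passing straight to the loss. I expect this to be immediate from the proof of Theorem~\ref{thm:upper_bound_loss}. On infinite DAGs, the exchange of sums relies on the stated integrability assumption. The only other care needed is the pointwise representative of $r$ discussed above.
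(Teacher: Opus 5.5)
Your proposal is correct and follows the paper's proof. The steps match: the signed occupancy bound $\sum_o(\nu_F(o)-\nu_B(o))h_{\bm\theta}(o)\le c_\nu\mathbb E_\nu|h_{\bm\theta}|$ from the proof of Theorem~\ref{thm:upper_bound_loss}, the split $h_{\bm\theta}=r_{\mu,\bm\theta}+\bm u_\mu^{*\top}\psi_{\bm\theta}$ via Lemma~\ref{lemma:local-coefficient}, Cauchy--Schwarz with change of measure for the residual, and Assumption~\ref{assump:score} with the coefficient bound for the tangent part. Your remark about fixing a pointwise representative of $r_{\mu,\bm\theta}$, which is harmless once finite coverage gives $\nu\ll\mu$, is a small clarification the paper leaves implicit.
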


This theorem bounds distribution mismatch by the gradient norm and a residual term. The factor $M/\lambda$ describes the conditioning of the visible component, while the coverage ratio controls transfer from training objects to occupancy discrepancies. With finite coverage, the zero-residual tabular case has zero terminal symmetric KL at every stationary point satisfying the assumptions.

\section{Convergence Rates of Stochastic GFlowNet Training}
\label{sec:convergence-rates}

Section~\ref{sec:stationary} explains when a small gradient implies a small imbalance. We now ask whether stochastic training reaches such a regime, and how quickly. We first fix the sampling distribution, so that tangent geometry directly turns SGD descent into loss contraction. We then allow the sampling distribution to change, and finally allow a separate backward-policy update. The stochastic-optimization, moving-objective, and actor--critic background for is summarized in Appendix~\ref{app:related-techniques}; detailed proofs and regularity checks appear in Appendix~\ref{app:convergence-proofs}. % Each extension introduces a drift that must be controlled relative to the descent.

\subsection{Fixed sampling: from geometry to convergence}

Recall that $\bm\theta$ contains all parameters trained by the balance loss, including the backward parameters for joint balance training. Make the external arguments of the same expected loss explicit:
\[
    \mathcal L_{\mathcal O}(\bm\theta,\bm\omega;\bm\varphi)
    :=\frac12\mathbb E_{o\sim\mu_{\bm\omega}}
       [h_{\bm\theta,\bm\varphi}(o)^2].
    % \label{eq:dynamic-balance-objective}
\]
The separate argument $\bm\varphi$ is omitted for joint balance training and for FM; in those cases write $h_{\bm\theta}$ in place of $h_{\bm\theta,\bm\varphi}$. For now, $\bm\omega$ and any separate $\bm\varphi$ are fixed. Joint balance training simply enlarges the optimization block: its backward-policy update is part of the descent on the same objective.

Let $\mathcal F_t$ contain the history before iteration $t$ and write $\mathbb E_t=\mathbb E[\,\cdot\mid\mathcal F_t]$. All current parameters are $\mathcal F_t$-measurable, and $\mathcal L_t:=\mathcal L_{\mathcal O}(\bm\theta_t,\bm\omega_t;\bm\varphi_t)$ denotes their current loss. The balance update satisfies
\begin{equation}
    \bm\theta_{t+1}=\bm\theta_t-\eta_t\bm g_t,\qquad
    \mathbb E_t\bm g_t=\nabla_{\bm\theta}\mathcal L_t,\qquad
    \mathbb E_t\|\bm g_t\|_2^2
    \leq\|\nabla_{\bm\theta}\mathcal L_t\|_2^2+\sigma^2.
    \label{eq:balance-sgd}
\end{equation}
This derivative holds the sampling distribution and any separate backward parameters fixed, including when they change between iterations below.
We assume $\mathbb E\mathcal L_0<\infty$, finite second moments of all
updates, and finite score second moments. All constants are deterministic and independent of $t$.

%Two ingredients give the baseline rate. Smoothness controls the error of a stochastic descent step, while tangent geometry ensures that the gradient detects imbalance beyond the tangent-residual bound.
\begin{assumption}[Smooth stochastic objective]
\label{assump:sgd-smoothness}
The oracle satisfies~\eqref{eq:balance-sgd}. For some $L>0$, the loss has an
$L$-Lipschitz gradient jointly in the arguments that change, on a convex
admissible region containing every actual update segment.
\end{assumption}
\begin{assumption}[Uniform tangent geometry]
\label{assump:uniform-tangent-geometry}
For the residual and Fisher matrix defined in Section~\ref{sec:stationary},
there exist $\bar\epsilon\geq0$ and $\lambda>0$ such that, almost surely
at every iterate,
\[
    \|r_{\mu_{\bm\omega_t},\bm\theta_t}\|_{\mu_{\bm\omega_t}}
    \leq\bar\epsilon,\qquad
    \|\mathcal I_{\mu_{\bm\omega_t}}(\bm\theta_t)^\dagger\|_{2\to2}
    \leq\frac1\lambda.
\]
Both quantities use $\psi_{\bm\theta}=\nabla_{\bm\theta}h$, with any separate backward parameters fixed at their current values.
\end{assumption}

Here $\bar\epsilon$ bounds the residual along training, including changes in the external arguments. The projection identity gives an approximate Polyak--\L{}ojasiewicz (PL) inequality:
\begin{lemma}[Approximate PL geometry]
\label{lemma:approximate-PL}
Under Assumption~\ref{assump:uniform-tangent-geometry}, at every iterate,
\begin{equation}
    2\mathcal L_t\leq\bar\epsilon^2+
       \frac1\lambda\|\nabla_{\bm\theta}\mathcal L_t\|_2^2.
    \label{eq:approximate-PL}
\end{equation}
\end{lemma}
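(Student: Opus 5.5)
The plan is to split $h_{\bm\theta_t}$ orthogonally in $L_2(\mu_{\bm\omega_t})$ into its tangent projection and the tangent residual, then bound each part separately: the residual by $\bar\epsilon$ from Assumption~\ref{assump:uniform-tangent-geometry}, and the projection through the gradient. Fix an iterate and write $\mu=\mu_{\bm\omega_t}$, $h=h_{\bm\theta_t}$ (with any separate $\bm\varphi_t$ held fixed), $\psi=\psi_{\bm\theta_t}$, $\mathcal I=\mathcal I_\mu(\bm\theta_t)$. By definition $h=\Pi^\mu_{\mathsf T}h+r_{\mu,\bm\theta_t}$ with the two pieces orthogonal. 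Pythagoras then gives
\[
2\mathcal L_t=\|h\|_\mu^2=\|\Pi^\mu_{\mathsf T}h\|_\mu^2+\|r_{\mu,\bm\theta_t}\|_\mu^2,
\]
and the second term is at most $\bar\epsilon^2$.

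For the tangent part I will invoke Lemma~\ref{lemma:local-coefficient}, whose Fisher hypothesis is supplied at each iterate by Assumption~\ref{assump:uniform-tangent-geometry}. It gives $\Pi^\mu_{\mathsf T}h=\bm u^{*\top}\psi$ with $\bm u^*=\mathcal I^\dagger\bm g$, where $\bm g:=\nabla_{\bm\theta}\mathcal L_t=\mathbb E_\mu[h\psi]$ by the assumed interchange of differentiation and summation. Hence
\[
\|\Pi^\mu_{\mathsf T}h\|_\mu^2=\bm u^{*\top}\mathcal I\,\bm u^*=\bm g^\top\mathcal I^\dagger\mathcal I\,\mathcal I^\dagger\bm g=\bm g^\top\mathcal I^\dagger\bm g .
\]

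The main step, and the point needing care, is bounding $\bm g^\top\mathcal I^\dagger\bm g$ by $\|\bm g\|_2^2/\lambda$ rather than using the cruder bound $\|\mathcal I\|\,\|\bm u^*\|^2\le\|\mathcal I\|\,\|\bm g\|^2/\lambda^2$, which carries the wrong constant. Since $\mathcal I^\dagger$ is symmetric positive semidefinite with spectral norm at most $1/\lambda$, we get $\bm g^\top\mathcal I^\dagger\bm g\le\|\bm g\|_2^2/\lambda$ directly. Membership of $\bm g$ in $\mathrm{range}(\mathcal I)$ is not even needed, although it holds because $\bm g=\mathbb E_\mu[h\psi]$ is orthogonal to the null space of $\mathcal I$. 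Combining the two parts gives $2\mathcal L_t\le\bar\epsilon^2+\lambda^{-1}\|\nabla_{\bm\theta}\mathcal L_t\|_2^2$ almost surely at every iterate.
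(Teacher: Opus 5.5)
Your proof is correct and matches the paper's argument essentially step for step: Pythagoras on $h=\Pi h+r$, the coefficient $\bm u^*=\mathcal I^\dagger\nabla_{\bm\theta}\mathcal L_t$ from Lemma~\ref{lemma:local-coefficient} giving $\|\Pi h\|_\mu^2=(\nabla_{\bm\theta}\mathcal L_t)^\top\mathcal I^\dagger\nabla_{\bm\theta}\mathcal L_t$, and then the bound $\|\mathcal I^\dagger\|_{2\to2}\le 1/\lambda$. The paper additionally notes that the tangent space is finite-dimensional, hence closed in $L_2(\mu)$, so the projection is well defined even when $\mathcal O$ is infinite.
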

Thus the gradient detects loss above the residual level. Note that for a sufficiently small step-size, smoothness gives expected descent of order $\eta_t\|\nabla_{\bm\theta}\mathcal L_t\|_2^2$, up to an $O(\eta_t^2)$ noise term. The lemma therefore converts this into contraction of $\mathcal L_t$ outside an $O(\bar\epsilon^2)$ residual neighborhood.
\begin{theorem}[SGD with fixed sampling]
\label{thm:sgd-fixed-sampling}
Suppose Assumptions~\ref{assump:sgd-smoothness}
and~\ref{assump:uniform-tangent-geometry} hold. If $\mu$ is fixed and
$P_B$ is fixed or jointly balance-trained, there exist $a,t_0,C>0$ such
that $\eta_t=a/(t+t_0)$ yields
\begin{equation}
    \mathbb E\mathcal L_T
    \leq C\bar\epsilon^2+\frac{C}{T+t_0}.
    \label{eq:sgd-loss-rate}
\end{equation}
\end{theorem}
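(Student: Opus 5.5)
The argument is the standard SGD-under-PL recursion with a residual floor. It has three stages: a one-step descent inequality, the approximate PL inequality, and an induction with step size $a/(t+t_0)$.

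\textbf{Descent step.} Since $\mu$ is fixed and $P_B$ is either fixed or part of $\bm\theta$, the objective $\mathcal L_{\mathcal O}(\cdot)$ does not move between iterations. Hence $\mathcal L_{t+1}=\mathcal L_{\mathcal O}(\bm\theta_{t+1})$ involves only the $\bm\theta$ update. By Assumption~\ref{assump:sgd-smoothness} and the descent lemma along the segment $[\bm\theta_t,\bm\theta_{t+1}]$,
\[
\mathcal L_{t+1}\le \mathcal L_t-\eta_t\langle\nabla_{\bm\theta}\mathcal L_t,\bm g_t\rangle+\tfrac{L}{2}\eta_t^2\|\bm g_t\|_2^2 .
\]
Take $\mathbb E_t$ and use~\eqref{eq:balance-sgd}:
\[
\mathbb E_t\mathcal L_{t+1}\le \mathcal L_t-\eta_t\bigl(1-\tfrac{L\eta_t}{2}\bigr)\|\nabla_{\bm\theta}\mathcal L_t\|_2^2+\tfrac{L\sigma^2}{2}\eta_t^2 .
\]
Choose $t_0\ge aL$ so that $\eta_t\le 1/L$. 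The middle factor is then at least $1/2$.

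\textbf{PL contraction.} By Lemma~\ref{lemma:approximate-PL}, $\|\nabla_{\bm\theta}\mathcal L_t\|_2^2\ge \lambda(2\mathcal L_t-\bar\epsilon^2)$. Substituting, and noting this direction of the inequality is valid because the coefficient of $\|\nabla\mathcal L_t\|^2$ is negative, gives
\[
\mathbb E_t\mathcal L_{t+1}\le (1-\lambda\eta_t)\mathcal L_t+\tfrac{\lambda}{2}\eta_t\bar\epsilon^2+\tfrac{L\sigma^2}{2}\eta_t^2 .
\]
Set $\Delta_t:=\mathbb E\mathcal L_t-\bar\epsilon^2/2$ and take total expectation, which is legitimate since $\mathbb E\mathcal L_0<\infty$ and all updates have finite second moments. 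This yields
\[
\Delta_{t+1}\le (1-\lambda\eta_t)\Delta_t+\tfrac{L\sigma^2}{2}\eta_t^2 .
\]
We also need $\lambda\eta_t\le 1$, which is ensured by $t_0\ge a\lambda$.

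\textbf{Induction.} Take $a>1/\lambda$; the choice $a=2/\lambda$ is convenient. We claim $\Delta_t^+\le D/(t+t_0)$, where
\[
D=\max\Bigl\{t_0\,\mathbb E\mathcal L_0,\ \tfrac{L\sigma^2a^2}{2(\lambda a-1)}\Bigr\}.
\]
The positive part handles the case $\Delta_t<0$, since $1-\lambda\eta_t\ge0$ lets the recursion pass to $\Delta^+$. The inductive step is the standard check that
\[
\Bigl(1-\tfrac{\lambda a}{t+t_0}\Bigr)\tfrac{D}{t+t_0}+\tfrac{L\sigma^2a^2}{2(t+t_0)^2}\le \tfrac{D}{t+t_0+1},
\]
which follows from $(t+t_0-1)/(t+t_0)^2\le 1/(t+t_0+1)$ and the choice of $D$. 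This gives~\eqref{eq:sgd-loss-rate} with $C=\max\{1/2,D\}$.

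\textbf{Main obstacle.} The delicate point is justifying that the descent lemma applies along each actual update segment and that the constants are deterministic. Assumption~\ref{assump:sgd-smoothness} is stated exactly for this, and Assumption~\ref{assump:uniform-tangent-geometry} holds almost surely at every iterate, so the conditional inequalities can be averaged without extra integrability beyond what is assumed. For joint backward training we simply regard $P_B$'s parameters as coordinates of $\bm\theta$; the score and Fisher in Assumption~\ref{assump:uniform-tangent-geometry} then include them, and nothing else changes.
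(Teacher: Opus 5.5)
Your proof is correct and follows the paper's route: the descent lemma with the unbiased oracle, the approximate PL inequality of Lemma~\ref{lemma:approximate-PL} giving a contraction $(1-\lambda\eta_t)$ plus an $O(\eta_t\bar\epsilon^2+\eta_t^2)$ forcing, a shift by a multiple of $\bar\epsilon^2$ with a positive part, and induction for $\eta_t=a/(t+t_0)$ with $a>1/\lambda$. The one difference is that the paper obtains its recursion by specializing a general drift estimate (with $z_t=0$ and $\rho=0$) and uses a generic shift $A\ge C/c$, whereas you work directly in the fixed-sampling case and get the explicit floor $\bar\epsilon^2/2$ and an explicit constant $D$.
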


% To recover the sampling goal, we assume the uniform bound of occupancy mass $c_{\nu,t}\leq\bar{B}$, and the uniform coverage bound $\|\nu_t/\mu_{\bm\omega_t}\|_\infty\leq\bar C_\nu$ whenever $c_{\nu,t}>0$. Theorem~\ref{thm:upper_bound_loss} and Jensen's inequality then give
% \begin{equation}
%     \mathbb E\left[\mathrm{KL}(p_{F,T}^\top\|p_R)
%        +\mathrm{KL}(p_R\|p_{F,T}^\top)\right]
%     \leq C\bar\epsilon+\frac{C}{\sqrt{T+t_0}}.
%     \label{eq:sgd-terminal-rate}
% \end{equation}
% On finite DAGs, standard positive tabular models have $\bar\epsilon=0$ by Proposition~\ref{prop:tabular-zero-residual}. Under the remaining oracle, smoothness, conditioning, and coverage conditions, their expected loss and terminal symmetric KL therefore decay as $O(T^{-1})$ and $O(T^{-1/2})$.

To convert loss convergence into terminal accuracy, we require:
\begin{assumption}
\label{assump:terminal-control}
Assume $0<Z<\infty$. Almost surely at every iterate, policies are normalized and positive on allowed edges, and flow quantities in $h$ are finite and positive. Forward trajectories reach a terminal state and backward trajectories reach $s_0$ almost surely. For FM and DB, trajectory sums of $|h|$ are integrable under both $p_{F,t}$ and $p_{B,t}$.
For $\bar B,\bar C_\nu<\infty$, uniformly in $t$,
\[
    c_{\nu,t}\leq\bar B,\qquad
    \left\|\frac{\nu_t}{\mu_{\bm\omega_t}}\right\|_\infty
       \leq\bar C_\nu\quad\text{whenever }c_{\nu,t}>0.
\]
\end{assumption}
For TB, $c_{\nu,t}\leq2$ automatically; finite DAGs also ensure the FM/DB mass bound and integrability.

\begin{corollary}
\label{cor:sgd-terminal-rate}
Under Assumption~\ref{assump:terminal-control}, the loss bound~(\ref{eq:sgd-loss-rate}) implies, for some $C>0$,
\begin{equation}
    \mathbb E\left[\mathrm{KL}(p_{F,T}^\top\|p_R)
       +\mathrm{KL}(p_R\|p_{F,T}^\top)\right]
    \leq C\bar\epsilon+\frac{C}{\sqrt{T+t_0}}.
    \label{eq:sgd-terminal-rate}
\end{equation}
\end{corollary}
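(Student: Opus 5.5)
The plan is to apply Theorem~\ref{thm:upper_bound_loss} pathwise at the final iterate, and then take expectations using Jensen's inequality and the loss bound~(\ref{eq:sgd-loss-rate}).

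\textbf{Pathwise bound.} Assumption~\ref{assump:terminal-control} supplies, almost surely, the regularity that Theorem~\ref{thm:upper_bound_loss} and Lemma~\ref{lemma:KL_bound} need at $\bm\theta_T$:
\begin{itemize}
    \item normalized positive policies on allowed edges;
    \item finite positive flow quantities in $h$;
    \item almost-sure termination of forward and backward trajectories;
    \item integrability of the trajectory sums of $|h|$ for FM and DB.
\end{itemize}
We then split into two cases. On the event $c_{\nu,T}=0$, the symmetric KL is zero. On the event $c_{\nu,T}>0$, Theorem~\ref{thm:upper_bound_loss} gives
\[
    \mathrm{KL}(p_{F,T}^\top\|p_R)+\mathrm{KL}(p_R\|p_{F,T}^\top)
    \leq c_{\nu,T}\sqrt{2\left\|\frac{\nu_T}{\mu_{\bm\omega_T}}\right\|_\infty\mathcal L_T}
    \leq \bar B\sqrt{2\bar C_\nu\,\mathcal L_T},
\]
using the uniform bounds $c_{\nu,T}\le\bar B$ and $\|\nu_T/\mu_{\bm\omega_T}\|_\infty\le\bar C_\nu$. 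Since $\mathcal L_T\geq0$, the bound $\bar B\sqrt{2\bar C_\nu\,\mathcal L_T}$ holds almost surely on both events.

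\textbf{Taking expectations.} The map $x\mapsto\sqrt x$ is concave, so Jensen's inequality gives $\mathbb E\sqrt{\mathcal L_T}\le\sqrt{\mathbb E\mathcal L_T}$. Inserting~(\ref{eq:sgd-loss-rate}) and using subadditivity of the square root, $\sqrt{a+b}\le\sqrt a+\sqrt b$, we get
\[
    \mathbb E[\cdot]\le \bar B\sqrt{2\bar C_\nu}\left(\sqrt{C'}\,\bar\epsilon+\frac{\sqrt{C'}}{\sqrt{T+t_0}}\right),
\]
where $C'$ is the constant from~(\ref{eq:sgd-loss-rate}). Renaming the constant as $C:=\bar B\sqrt{2\bar C_\nu C'}$ yields~(\ref{eq:sgd-terminal-rate}).

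\textbf{Main subtlety.} The estimates themselves are routine; the delicate step is measurability. We must check that the symmetric KL at iterate $T$ is a well-defined (possibly extended-valued) nonnegative random variable, so that its expectation makes sense. We must also confirm that the hypotheses of Theorem~\ref{thm:upper_bound_loss} hold almost surely at the random iterate, not merely at deterministic parameters. Assumption~\ref{assump:terminal-control} is phrased "almost surely at every iterate" precisely so that the pathwise application goes through; nonnegativity then makes the expectation well defined.
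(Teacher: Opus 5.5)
Your proposal is correct and follows the paper's argument. You apply Theorem~\ref{thm:upper_bound_loss} pathwise at iterate $T$ with the uniform bounds $c_{\nu,T}\le\bar B$ and $\|\nu_T/\mu_{\bm\omega_T}\|_\infty\le\bar C_\nu$ (trivially when $c_{\nu,T}=0$), then use Jensen $\mathbb E\sqrt{\mathcal L_T}\le\sqrt{\mathbb E\mathcal L_T}$ and substitute~(\ref{eq:sgd-loss-rate}). The paper additionally re-verifies Theorem~\ref{thm:upper_bound_loss} under Assumption~\ref{assump:terminal-control}, where almost-sure termination plus integrability replaces the finite-expected-length assumption of Section~\ref{sec:objectives} (via Tonelli/Fubini for FM and DB, and $\log(p_F/p_B)=h+\log(Z/\widehat Z)$ for TB), a step you assert rather than check.
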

This follows from Theorem~\ref{thm:upper_bound_loss} and Theorem~\ref{thm:sgd-fixed-sampling}. See Appendix~\ref{app:convergence-fixed} for a detailed discussion.

% On finite DAGs, standard positive tabular models have $\bar\epsilon=0$ by Proposition~\ref{prop:tabular-zero-residual}. Under the remaining oracle, smoothness, conditioning, and coverage conditions, their expected loss and terminal symmetric KL therefore decay as $O(T^{-1})$ and $O(T^{-1/2})$.

For a different loss function, \citet{yu2025secrets} establishes best-iterate gradient convergence for FM and DB of $O(T^{-1/2})$ and $O(T^{-1/3})$, respectively, while our result provides $O(T^{-1/2})$ last-iterate global convergence for FM, DB and TB. A detailed comparison is provided in Appendix~\ref{app:related-gfn-theory}.

\subsection{Changing sampling: retaining net descent}
\label{sec:changing-sampling}

When training samples follow the evolving policies, changing $\mu$ changes the weights in the objective. Even a successful balance step can therefore be partly offset by the next sampling update. Write $\Delta_t^\omega:=\bm\omega_{t+1}-\bm\omega_t$. We make the following assumptions on the evolving sampling distribution.
\begin{assumption}
    \label{assump:controlled-omega-gradient}
    For $C_\omega>0$, almost surely at every iterate,  $$\|\nabla_{\bm\omega}\mathcal L_t\|_2\leq C_\omega\sqrt{2\mathcal L_t}.$$
\end{assumption}

\begin{assumption}
\label{assump:controlled-sampling} For constants $K_\theta,K_2\geq0$, almost surely at every iterate,
\begin{align*}
    \|\mathbb E_t\Delta_t^\omega\|_2
       &\leq K_\theta\eta_t\|\nabla_{\bm\theta}\mathcal L_t\|_2,\\
    \mathbb E_t\|\Delta_t^\omega\|_2^2
       &\leq K_2\eta_t^2
          (\|\nabla_{\bm\theta}\mathcal L_t\|_2^2+\sigma^2).
\end{align*}
For fixed sampling, these conditions are vacuous and the constants are zero.
\end{assumption}

Assumption~\ref{assump:controlled-omega-gradient} follows from bounded $h$ and a bounded sampling score $\|\nabla_{\bm\omega}\log\mu_{\bm\omega}(o)\|_2$. Assumption~\ref{assump:controlled-sampling} includes direct on-policy parameter copying, but modifications such as exploration mixtures, tempering, or delayed updates require their own sensitivity and update bounds. Appendix~\ref{app:convergence-regularity} gives sufficient conditions for these two assumptions. Together with the PL inequality, these bounds leave net descent when
\[
    \rho:=\frac{C_\omega K_\theta}{\sqrt\lambda}<1.
    % \label{eq:sampling-drift-coefficient}
\]

\begin{theorem}[SGD with controlled sampling]
\label{thm:sgd-controlled-sampling}
Suppose Assumptions~\ref{assump:sgd-smoothness}, \ref{assump:uniform-tangent-geometry}, \ref{assump:controlled-omega-gradient}, and~\ref{assump:controlled-sampling} hold and $\rho<1$. If there is no separate backward update (FM, or TB/DB with fixed or jointly balance-trained $P_B$), then, for suitable $a,t_0>0$, $\eta_t=a/(t+t_0)$ gives the loss bound~(\ref{eq:sgd-loss-rate}) and, under Assumption~\ref{assump:terminal-control}, the terminal bound~(\ref{eq:sgd-terminal-rate}).
\end{theorem}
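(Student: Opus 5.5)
We follow the fixed-sampling argument behind Theorem~\ref{thm:sgd-fixed-sampling}. The only new ingredient is that the joint descent lemma, taken along the segment from $(\bm\theta_t,\bm\omega_t)$ to $(\bm\theta_{t+1},\bm\omega_{t+1})$, now contains an $\bm\omega$-drift term. We show that this drift absorbs at most a $\rho$-fraction of the descent. Since the parameter $\bm\varphi$ is either absent or fixed, or is part of $\bm\theta$, it contributes nothing.

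\textbf{Step 1 (one-step bound).} By Assumption~\ref{assump:sgd-smoothness}, the gradient is $L$-Lipschitz jointly in $(\bm\theta,\bm\omega)$ on the admissible convex region. Hence
\[
\mathcal L_{t+1}\le \mathcal L_t-\eta_t\langle\nabla_{\bm\theta}\mathcal L_t,\bm g_t\rangle+\langle\nabla_{\bm\omega}\mathcal L_t,\Delta_t^\omega\rangle+L\big(\eta_t^2\|\bm g_t\|_2^2+\|\Delta_t^\omega\|_2^2\big).
\]
We take $\mathbb E_t$ and use the oracle conditions~\eqref{eq:balance-sgd}. The cross term is bounded via Cauchy--Schwarz, Assumption~\ref{assump:controlled-omega-gradient} and the first bound of Assumption~\ref{assump:controlled-sampling}:
\[
\langle\nabla_{\bm\omega}\mathcal L_t,\mathbb E_t\Delta_t^\omega\rangle\le C_\omega K_\theta\eta_t\sqrt{2\mathcal L_t}\,\|\nabla_{\bm\theta}\mathcal L_t\|_2 .
\]
The quadratic terms are at most $L(1+K_2)\eta_t^2(\|\nabla_{\bm\theta}\mathcal L_t\|_2^2+\sigma^2)$.

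\textbf{Step 2 (absorbing the drift; main obstacle).} The drift is proportional to $\sqrt{\mathcal L_t}$ rather than to the gradient, so it cannot be absorbed directly. We first try Lemma~\ref{lemma:approximate-PL} in the form $\sqrt{2\mathcal L_t}\le\bar\epsilon+\|\nabla_{\bm\theta}\mathcal L_t\|_2/\sqrt\lambda$, which gives
\[
C_\omega K_\theta\eta_t\sqrt{2\mathcal L_t}\,\|\nabla_{\bm\theta}\mathcal L_t\|_2\le\rho\,\eta_t\|\nabla_{\bm\theta}\mathcal L_t\|_2^2+C_\omega K_\theta\bar\epsilon\,\eta_t\|\nabla_{\bm\theta}\mathcal L_t\|_2 .
\]
Young's inequality then bounds the last term by $\tfrac{1-\rho}{4}\eta_t\|\nabla_{\bm\theta}\mathcal L_t\|_2^2+c\,\eta_t\bar\epsilon^2$. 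For $t_0$ large, so that $L(1+K_2)\eta_t\le(1-\rho)/4$, we obtain
\[
\mathbb E_t\mathcal L_{t+1}\le\mathcal L_t-\tfrac{1-\rho}{2}\eta_t\|\nabla_{\bm\theta}\mathcal L_t\|_2^2+c\,\eta_t\bar\epsilon^2+c'\eta_t^2\sigma^2 .
\]
The condition $\rho<1$ enters exactly here. The point to check carefully is that the residual enters only additively as $O(\eta_t\bar\epsilon^2)$.

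\textbf{Step 3 (contraction and recursion).} Lemma~\ref{lemma:approximate-PL} again gives $\|\nabla_{\bm\theta}\mathcal L_t\|_2^2\ge\lambda(2\mathcal L_t-\bar\epsilon^2)$. Define $\kappa=(1-\rho)\lambda$ and $e_t=\mathbb E\mathcal L_t-c''\bar\epsilon^2$. Taking full expectations, for constants $c_1,c_2$,
\[
e_{t+1}\le(1-\kappa\eta_t)e_t+c_2\eta_t^2 .
\]
We choose $\eta_t=a/(t+t_0)$ with $a\kappa>1$ and $t_0$ large enough for Step~2. The standard induction, as in Theorem~\ref{thm:sgd-fixed-sampling}, then gives $e_T\le C/(T+t_0)$, where $C$ is chosen so that $e_{0}\le C/t_0$ using $\mathbb E\mathcal L_0<\infty$. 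This yields~(\ref{eq:sgd-loss-rate}).

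\textbf{Step 4 (terminal bound).} Under Assumption~\ref{assump:terminal-control}, Theorem~\ref{thm:upper_bound_loss} gives, pathwise, that the symmetric KL at iterate $T$ is at most $\bar B\sqrt{2\bar C_\nu\mathcal L_T}$. Jensen's inequality gives $\mathbb E\sqrt{\mathcal L_T}\le\sqrt{\mathbb E\mathcal L_T}$, and $\sqrt{a+b}\le\sqrt a+\sqrt b$ then gives~(\ref{eq:sgd-terminal-rate}), exactly as in Corollary~\ref{cor:sgd-terminal-rate}.
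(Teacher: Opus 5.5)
Your proposal is correct and follows the paper's proof step for step. The paper's argument consists of the joint descent lemma, a bound of the $\bm\omega$-drift by $C_\omega K_\theta\eta_t\sqrt{2\mathcal L_t}\,\|\nabla_{\bm\theta}\mathcal L_t\|_2$, the conversion $\sqrt{2\mathcal L_t}\le\bar\epsilon+\|\nabla_{\bm\theta}\mathcal L_t\|_2/\sqrt\lambda$ from the approximate PL inequality, Young's inequality with a small-step condition, contraction at rate $\lambda(1-\rho)$, the harmonic-stepsize scalar induction, and Jensen for the terminal bound, which is exactly what you do. The one detail to add is that $t_0$ must also ensure $\kappa\eta_t\le1$, as the paper does by imposing $c\eta_*\le 1/2$; without nonnegative contraction factors, the induction on your possibly negative $e_t$ can break.
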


% In short, controlled sampling changes the constants, but preserves the baseline orders. The restriction $\rho<1$ is sufficient, and is automatic for fixed sampling.

\paragraph{Practical interpretation of the drift condition.}
Theorem~\ref{thm:sgd-controlled-sampling} identifies $\rho<1$ as a sufficient condition for retaining the baseline rates under changing sampling. To obtain a smaller $\rho$, we can apply fixed exploration mixtures, tempering and experience replay to reduce $C_\omega$, or incorporate damped updates or target updates to reduce $K_\theta$. Refer to Appendix~\ref{app:sampling-stabilization} for a detailed discussion.

\subsection{Separate backward updates: controlling the extra drift}
\label{sec:changing-pb}

For TB and DB, updating $\bm\varphi$ by another criterion changes the balance equations and need not decrease the balance loss. Define
\[
    z_t:=\bigl(\mathbb E_t\|\bm\varphi_{t+1}-\bm\varphi_t\|_2^2\bigr)^{1/2},
    \qquad d_t:=(\mathbb E z_t^2)^{1/2}.
\]
Similar to the sampling parameters, we make the following assumptions.

\begin{assumption}
    \label{assump:controlled-varphi-gradient}
    For $C_\varphi>0$, almost surely at every iterate,  $$\|\nabla_{\bm\varphi}\mathcal L_t\|_2\leq C_\varphi\sqrt{2\mathcal L_t}.$$
\end{assumption}

\begin{assumption}
\label{assump:controlled-sampling-extension} For constants $K_\theta,K_\varphi,K_2\geq0$, almost surely at every iterate,
\begin{align*}
    \|\mathbb E_t\Delta_t^\omega\|_2
       &\leq K_\theta\eta_t\|\nabla_{\bm\theta}\mathcal L_t\|_2+K_\varphi z_t,\\
    \mathbb E_t\|\Delta_t^\omega\|_2^2
       &\leq K_2\left[\eta_t^2
          (\|\nabla_{\bm\theta}\mathcal L_t\|_2^2+\sigma^2)+z_t^2\right].
\end{align*}
For fixed sampling, these conditions are vacuous and the constants are zero.
\end{assumption}

% For TB and DB, a backward policy trained by another criterion changes the
% balance equations themselves, although their terminal target remains
% $p_R$. Here $\bm\varphi$ is external to $\bm\theta$. Unlike joint balance training, this update
% need not decrease $\mathcal L_{\mathcal O}$, so its effect must be bounded separately.
% Define its conditional and unconditional root-mean-square magnitudes by
% \[
%     z_t:=\bigl(\mathbb E_t\|\bm\varphi_{t+1}-\bm\varphi_t\|_2^2\bigr)^{1/2},
%     \qquad d_t:=(\mathbb E z_t^2)^{1/2}.
% \]
% For this changing block, assume
% \begin{equation}
%     \|\nabla_{\bm\varphi}\mathcal L_t\|_2
%        \leq C_\varphi\sqrt{2\mathcal L_t}.
%     \label{eq:backward-sensitivity}
% \end{equation}
% Bounded backward-policy scores imply this condition on a finite DAG.
% To allow sampling to respond to both parameter blocks, retain the sampling
% sensitivity bound and extend its two update bounds to
% \begin{equation}
% \begin{aligned}
%     \|\mathbb E_t\Delta_t^\omega\|_2
%        &\leq K_\theta\eta_t\|\nabla_{\bm\theta}\mathcal L_t\|_2+K_\varphi z_t,\\
%     \mathbb E_t\|\Delta_t^\omega\|_2^2
%        &\leq K_2\left[\eta_t^2
%           (\|\nabla_{\bm\theta}\mathcal L_t\|_2^2+\sigma^2)+z_t^2\right].
% \end{aligned}
% \label{eq:PB-controlled-sampling}
% \end{equation}
% Here $C_\varphi,K_\varphi$ are finite nonnegative constants; sampling
% constants are again zero when $\mu$ is fixed.

\begin{theorem}[SGD with separate backward updates]
\label{thm:sgd-general-PB}
Suppose Assumptions~\ref{assump:sgd-smoothness},\ref{assump:uniform-tangent-geometry}, \ref{assump:controlled-omega-gradient}, \ref{assump:controlled-varphi-gradient}, and~\ref{assump:controlled-sampling-extension} hold, with $\rho<1$. For suitable $a,t_0,c,C>0$, $\eta_t=a/(t+t_0)$ yields
\begin{equation}
    \mathbb E_t\mathcal L_{t+1}
    \leq(1-c\eta_t)\mathcal L_t
       +C\eta_t\bar\epsilon^2+C\eta_t^2
       +C\frac{z_t^2}{\eta_t}.
    \label{eq:general-PB-recursion-1}
\end{equation}
\begin{enumerate}
    \item If $d_t=o(\eta_t)$, then
    $\limsup\limits_{t\to\infty}\mathbb E\mathcal L_t\leq C\bar\epsilon^2$.
    In particular, $d_t=O(t^{-b})$ with any $b>1$ suffices.
    \vspace{-8pt}
    \item If $d_t=O(\eta_t^{3/2})$, equivalently $d_t=O(t^{-3/2})$,
    then $\mathbb E\mathcal L_T=O(\bar\epsilon^2+T^{-1})$.
    \item If $d_t=O(t^{-b})$ with $1<b<3/2$, then
    $\mathbb E\mathcal L_T=O(\bar\epsilon^2+T^{-2(b-1)})$.
\end{enumerate}
Under Assumption~\ref{assump:terminal-control}, case (i) gives an $O(\bar\epsilon)$ neighborhood for expected terminal symmetric KL. In cases (ii) and (iii), the expected terminal symmetric KL is bounded, respectively, by $O(\bar\epsilon+T^{-1/2})$ and $O(\bar\epsilon+T^{-(b-1)})$.
\end{theorem}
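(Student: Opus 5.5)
The plan is to derive the one-step recursion~(\ref{eq:general-PB-recursion-1}) by applying the joint smoothness of Assumption~\ref{assump:sgd-smoothness} along the segment from $(\bm\theta_t,\bm\omega_t,\bm\varphi_t)$ to $(\bm\theta_{t+1},\bm\omega_{t+1},\bm\varphi_{t+1})$. The descent lemma gives
\[
\mathcal L_{t+1}\leq \mathcal L_t-\eta_t\nabla_{\bm\theta}\mathcal L_t^\top\bm g_t+\nabla_{\bm\omega}\mathcal L_t^\top\Delta_t^\omega+\nabla_{\bm\varphi}\mathcal L_t^\top\Delta_t^\varphi+\tfrac L2\bigl(\eta_t^2\|\bm g_t\|_2^2+\|\Delta_t^\omega\|_2^2+\|\Delta_t^\varphi\|_2^2\bigr)
\]
with $\Delta_t^\varphi:=\bm\varphi_{t+1}-\bm\varphi_t$. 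I then take $\mathbb E_t$ and bound each term.

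\textbf{The $\bm\theta$ term.} It contributes $-\eta_t G_t^2$, where $G_t:=\|\nabla_{\bm\theta}\mathcal L_t\|_2$.

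\textbf{The $\bm\omega$ term.} Since $\nabla_{\bm\omega}\mathcal L_t$ is $\mathcal F_t$-measurable, it is at most $C_\omega\sqrt{2\mathcal L_t}\,\|\mathbb E_t\Delta_t^\omega\|_2$ by Assumption~\ref{assump:controlled-omega-gradient}. Assumption~\ref{assump:controlled-sampling-extension} then bounds this by $C_\omega\sqrt{2\mathcal L_t}(K_\theta\eta_tG_t+K_\varphi z_t)$.

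\textbf{The $\bm\varphi$ term.} By Cauchy--Schwarz and Assumption~\ref{assump:controlled-varphi-gradient}, it is at most $C_\varphi\sqrt{2\mathcal L_t}\,z_t$.

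\textbf{Quadratic terms.} These are bounded by $C\eta_t^2(G_t^2+\sigma^2)+Cz_t^2$ via~\eqref{eq:balance-sgd} and Assumption~\ref{assump:controlled-sampling-extension}.

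The crux is absorbing the cross term $C_\omega K_\theta\eta_t\sqrt{2\mathcal L_t}\,G_t$. Lemma~\ref{lemma:approximate-PL} gives $\sqrt{2\mathcal L_t}\leq\bar\epsilon+G_t/\sqrt\lambda$, so this term is at most $\rho\eta_tG_t^2+C_\omega K_\theta\eta_t\bar\epsilon G_t$. The second piece is handled by Young's inequality: it is at most $\tfrac{1-\rho}{4}\eta_tG_t^2+C\eta_t\bar\epsilon^2$.

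The $z_t$ terms $C\sqrt{2\mathcal L_t}\,z_t$ are also handled by Young's inequality with weight $\eta_t$. This gives at most $\kappa\eta_t\mathcal L_t+Cz_t^2/\eta_t$, with $\kappa$ small; this is the source of the $z_t^2/\eta_t$ term.

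Choosing $t_0$ large makes $\eta_t$ small enough that the $C\eta_t^2G_t^2$ term is absorbed. What remains is a net $-\tfrac{1-\rho}{2}\eta_tG_t^2$. Lemma~\ref{lemma:approximate-PL} again gives $G_t^2\geq\lambda(2\mathcal L_t-\bar\epsilon^2)$, which turns this into $-c'\eta_t\mathcal L_t+C\eta_t\bar\epsilon^2$. Taking $\kappa<c'$ yields~(\ref{eq:general-PB-recursion-1}) with $c=c'-\kappa$. The main obstacle I expect is bookkeeping these absorptions so that $\rho<1$ alone suffices. A secondary point is that the Taylor bound must stay inside the convex admissible region, which is assumed.

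\textbf{Rates.} I take total expectations, with $u_t:=\mathbb E\mathcal L_t$ and $\mathbb E z_t^2=d_t^2$, which gives
\[
u_{t+1}\leq(1-c\eta_t)u_t+C\eta_t\bar\epsilon^2+C\eta_t^2+Cd_t^2/\eta_t .
\]
I choose $a$ with $ca>2$, so the deterministic part follows the standard Chung-type argument used for Theorem~\ref{thm:sgd-fixed-sampling}. Set $v_t:=u_t-C\bar\epsilon^2/c$ and unroll. The products $\prod_{s=k+1}^{T}(1-c\eta_s)$ behave like $((k+t_0)/(T+t_0))^{ca}$, and these weights control the perturbation $d_k^2/\eta_k$.

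\emph{Case (i).} Here $d_k^2/\eta_k=o(\eta_k)$, and a weighted-averaging (Toeplitz) lemma gives $\limsup v_t\leq0$. Note that $d_t=O(t^{-b})$ with $b>1$ is $o(t^{-1})=o(\eta_t)$, so this condition is covered.

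\emph{Case (ii).} Here $d_k^2/\eta_k=O(\eta_k^2)$, so it merges with the $\eta_t^2$ term, and the $O(T^{-1})$ bound follows as in Theorem~\ref{thm:sgd-fixed-sampling}.

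\emph{Case (iii).} Here $d_k^2/\eta_k=O(k^{1-2b})$. Summing against the weights gives $T^{-ca}\sum_k k^{ca}k^{1-2b}=O(T^{2-2b})$ when $ca>2b-2$, and $2b-2<1$, so this holds.

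\textbf{Terminal bounds.} These follow exactly as in Corollary~\ref{cor:sgd-terminal-rate}. Theorem~\ref{thm:upper_bound_loss} with Assumption~\ref{assump:terminal-control} bounds the symmetric KL by $\bar B\sqrt{2\bar C_\nu\mathcal L_T}$. Jensen's inequality then gives $\mathbb E\sqrt{\mathcal L_T}\leq\sqrt{u_T}$, and $\sqrt{x+y}\leq\sqrt x+\sqrt y$ splits off the $\bar\epsilon$ term. For case (i) I apply this to the $\limsup$.
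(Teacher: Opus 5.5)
Your proposal is correct and follows essentially the same route as the paper. Both arguments run joint-smoothness descent, bound the drift terms via the sensitivity and update assumptions, use the approximate PL inequality to turn the cross term into $\rho\eta_t x_t^2$, apply Young's inequality to produce the $z_t^2/\eta_t$ forcing, and finish with a harmonic-stepsize scalar recursion for the three decay regimes. The differences are only bookkeeping: you absorb $\sqrt{2\mathcal L_t}\,z_t$ against $\kappa\eta_t\mathcal L_t$ rather than substituting the PL bound first, and you unroll the recursion (Toeplitz/Chung) where the paper uses an inductive comparison and a $(u_t-\varepsilon)_+$ contraction. You should also say explicitly that the leftover $Cz_t^2$ from the quadratic terms is absorbed into $Cz_t^2/\eta_t$ once $\eta_t\le 1$.
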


The three additive terms in~\eqref{eq:general-PB-recursion-1} separate representation error, SGD noise, and backward-policy drift. After taking expectation, the last is $Cd_t^2/\eta_t$. For the expected loss to approach the residual neighborhood it suffices that this be $o(\eta_t)$; to retain the baseline rate it suffices that it be $O(\eta_t^2)$, the same order as SGD noise. 

\section{Conclusion}

In this paper, we develop a unified theory that connects balance loss, approximate stationarity, and sampling accuracy for FM, DB, and TB. We bound terminal symmetric KL by the square root of the expected balance loss and characterize stationary-point quality through a tangent-space residual that vanishes for standard positive tabular models. Under suitable regularity, coverage, and conditioning assumptions, we establish finite-time convergence guarantees and quantify how sampling and backward-policy drift must be controlled to preserve them. Our findings clarify the conditions under which optimization progress leads to accurate reward-proportional sampling, marking a significant advancement in understanding the training of GFlowNets. This insight is valuable for further algorithmic development and real-world applications. Moreover, our analysis builds upon and enhances existing function-approximation and distribution-shift techniques from RL and optimization, indicating their potential applicability to a broader range of fields.

However, the guarantees outlined in this paper depend on adequate coverage, well-conditioned effective tangent geometry, and controlled drift. These factors require further verification in practical implementations, particularly when neural networks are involved. We recognize these limitations and view them as important directions for future research, despite the challenges they present.

\clearpage
\FloatBarrier
\begingroup
\small
\urlstyle{same}
\bibliographystyle{dilab_ref}
\bibliography{iclr2027_conference}
\endgroup

\makeappendixtoc
\appendix
\appendix

\section{Related Work and Technical Perspective}
\label{app:related-work}

We review the theoretical development of GFlowNets, emphasizing results on sampling accuracy and learning, and then explain how our analysis connects to techniques in reinforcement learning and optimization.

\subsection{Theoretical Progress in GFlowNets}
\label{app:related-gfn-theory}

\paragraph{Foundations and connections to other theories.}
The foundations of GFlowNets characterize reward-proportional sampling through consistent flows on a construction graph \citep{bengio2021flow,bengio2023gflownet}. Flow Matching (FM) and Detailed Balance (DB) impose local consistency, while Trajectory Balance (TB) enforces consistency along complete generation paths \citep{malkin2022trajectory}. Under the corresponding support and reward-matching conditions, exact balance yields the target terminal distribution. Sub-Trajectory Balance (SubTB) interpolates between edge and complete-trajectory constraints, providing intermediate scopes for credit assignment \citep{madan2023learning}. Appendix~\ref{app:subtb} extends our analysis to SubTB under its stated conditions. These results establish the correctness of balanced solutions; understanding an imperfectly trained, parameterized sampler requires additional analysis.

Connections to other inference and learning frameworks clarify these objectives. \citet{malkin2023gflownets} establish expected-gradient identities between balance objectives and variational inference under specified trajectory distributions and choices of the differentiated parameter block. \citet{tiapkin2024generative} reformulate GFlowNet learning as entropy-regularized RL using a particular reward and regularizer structure. More recently, \citet{deleu2025relative} identify an equivalence between Relative Trajectory Balance and the KL-regularized RL method Trust-PCL. Loss-design theory further connects regression penalties to distributional objectives: \citet{hu2025beyond} relate expected regression gradients to gradients of divergences between flows on minimal cuts under an appropriate sampling structure. \citet{fawkes2026trajectory} develop an $f$-trajectory-balance family with corresponding on-policy $f$-divergence gradients and target-preserving off-policy minima under their conditions. Such identities illuminate optimization signals and loss design, but require further arguments to characterize parameterized stationary points or finite-time sampling accuracy.

\paragraph{Imbalance sensitivity and representational limits.}
\citet{silva2025when} investigate how balance violations affect sampling accuracy. Their Theorem~1 bounds terminal total variation (TV) for a specified perturbation of an initially balanced network, obtained by adding and propagating flow. The bounds reveal dependence on the flow scale and target mass of downstream terminal states. This is a sensitivity result for the stated perturbation construction, rather than a general bound on arbitrary expected training loss. They also characterize expressiveness limitations of GFlowNets using 1-WL graph neural networks, propose more expressive look-ahead policies, and introduce flow consistency in subgraphs (FCS) as a distributional diagnostic with theoretical connections to TV. Their representational results concern which distributions a model class can express. Our tangent residual instead measures imbalance unexplained by local parameter derivatives at the current model.

\paragraph{Generalization, coverage, and statistical guarantees.}
\citet{krichel2024on} formalize generalization and reward-perturbation stability and empirically study generalization to longer trajectories. Their stability proposition assumes globally minimized TB, a fixed uniform backward policy, and reward functions with equal total mass; the length-generalization experiments do not establish an unrestricted extrapolation guarantee. \citet{silva2025generalization} develop data-dependent PAC-Bayesian bounds using bounded FCS risk, together with oracle bounds for terminal KL and DB-based risk under their sampling and regularity assumptions. Their Proposition~4.2 also bounds terminal TV through an expected squared trajectory log-density ratio and a distribution-mismatch factor involving $\chi^2$ divergence. Its expectation samples terminal states from a chosen distribution and then backward trajectories; the ratio uses normalized target probabilities. Our training measure can instead weight states, edges, or complete trajectories, with normalization terms eliminated by symmetrization. Thus, coverage-dependent links between average discrepancy and sampling error already exist. Their analysis motivates Subgraph Asynchronous Learning, which divides training across smaller subnetworks. Our contribution is a common occupancy-discrepancy analysis for FM, DB, and TB, and its connection to stationary-point quality and stochastic optimization. The population bounds here complement statistical generalization guarantees: coverage alone does not justify substituting empirical loss for population loss.

\paragraph{Comparison with \citet{yu2025secrets}.}
Yu studies four aspects of learning behavior---convergence, sample complexity, implicit regularization, and robustness---using the following objectives\footnote{There is a typo in their definition of TB loss, where the backward probabilities are missing.} (interpreted under our notations):
\[\hat{L}_{\mathrm{FM}}(s)=\left(F^{\rightarrow}(s)-F^{\leftarrow}(s)\right)^2,\quad \hat{L}_{\mathrm{DB}}(s)=\left(\frac{F^{\rightarrow}(s)}{F^{\leftarrow}(s)}-1\right)^2, \quad \hat{L}_{\mathrm{TB}}(s)=\left(\frac{F^{\rightarrow}(s)}{F^{\leftarrow}(s)}-1\right)^2.\]

These definitions differ from the squared log-ratio objectives used in practical implementations \citep{hernandez-garcia2024,torchgfn2026software} and in this paper. Different losses have different weighting and gradients, so their convergence statements are not interchangeable.

Under tabular parameterization, fixed sampling and fixed backward policies, Yu's Theorems~2 and~3 give $O(T^{-1/2})$ and $O(T^{-1/3})$ rates for the expected minimum squared gradient norm of FM and DB over all $T$ iterates. In contrast, our Theorem~\ref{thm:sgd-fixed-sampling} give $O(T^{-1/2})$ rate for the expected terminal symmetric KL of the $T$'s iterate of FM, DB and TB.

\paragraph{Comparison with \citet{lei2026stable}.}
Lei et al. study loss sensitivity, showing that small terminal TV need not prevent large balance loss, and develop guarantees in the converse direction. Their Theorem~3.5 bounds TV by $1-e^{-2c}$ when every squared TB residual is at most $c^2$; the corresponding FM/DB bound is $1-e^{-2Lc}$ for maximum trajectory length $L$. Their Theorem~3.6 provides a probabilistic certificate using independent forward trajectories and backward trajectories initialized from the target distribution. Its global form requires target-terminal samples; restricted-target sampling yields a subgraph certificate. They also propose adaptive reference flows to stabilize training, analyzing the associated fidelity trade-off.

Relative to Lei et al., our distributional statement uses expected squared log-ratio loss under a general, possibly off-policy training measure and controls terminal symmetric KL through occupancy mass and coverage. It does not require their uniform pointwise residual condition, but it can be vacuous under poor coverage and is a population inequality rather than a finite-sample certificate. The two analyses therefore control different error metrics under different hypotheses; neither gives an unconditional tightness ordering.

\paragraph{Extensions of the underlying environment.}
Other theoretical work broadens the modeling setting. \citet{lahlou2023theory} formulate generalized GFlowNets using measurable spaces, flow measures, and Markov kernels, extending balance conditions and sampling correctness to continuous and hybrid spaces under assumptions including finite absorption. \citet{morozov2025revisiting} revisit discrete non-acyclic environments, clarifying flow definitions, the role of backward policies, and the relationship between total flow and expected trajectory length. They also analyze fixed-backward training and stability issues when the backward policy is learned. These extensions address structural questions beyond the discrete DAG setting used here; transferring our approximate-balance and optimization analysis to them requires revisiting the occupancy and trajectory assumptions.

\subsection{Connections to Reinforcement Learning and Optimization}
\label{app:related-techniques}

\paragraph{Compatible approximation, natural gradients, and approximate PL geometry.}
Compatible function approximation regresses advantages onto policy-score features, connecting least-squares coefficients to natural policy gradients (NPG) \citep{sutton1999policy,kakade2001natural}. For general policy classes, transferred approximation error evaluates this regression under a comparator policy's occupancy, making distribution shift part of the optimality analysis \citep{agarwal2021theory}. Under Fisher nondegeneracy and bounded transferred error, \citet{liu2020improved} analyze NPG through compatible regression and a comparator-policy KL potential. For momentum policy gradient, \citet{ding2022global} derive a bound on the optimality gap by an approximation floor plus a multiple of the gradient norm. Related analyses develop this route for stochastic policy gradients \citep{fatkhullin2023stochastic} and NPG with Hessian-aided momentum variance reduction \citep{feng2024global}.

In the context of GFlowNets, the regression target is log-flow imbalance and the features are its parameter derivatives. The residual is local to the tangent space and measured under the training distribution, unlike transferred advantage error under a comparator occupancy. The feature Gram matrix is the Gauss--Newton matrix of the squared-imbalance loss with the training measure fixed, and need not be a policy Fisher. \citet{raykov2026information} use the forward trajectory Fisher to precondition GFlowNet balance-loss gradients. This Fisher coincides with the imbalance Gram matrix for the forward-only TB block under on-policy weighting with backward policy and source flow held fixed; their decomposition and approximation guarantees concern update geometry rather than approximation-dependent gradient domination. A lower bound on the nonzero Gram eigenvalues turns the projection identity into an approximate PL inequality: squared gradient norm controls loss above the tangent-residual floor. This also connects to nonlinear least-squares and tangent-kernel analyses \citep{karimi2016linear,jacot2018neural,liu2022loss}. 

\paragraph{Stochastic-gradient baselines and objective drift.}
The fixed-sampling part of Section~\ref{sec:convergence-rates} belongs to the classical stochastic-approximation lineage initiated by Robbins and Monro~\citep{robbins1951stochastic} and developed for recursive algorithms and stochastic optimization in later treatments~\citep{bottou2018optimization}. Conditional unbiasedness and a second-moment bound for the balance-gradient oracle, together with smoothness, give the usual expected-descent recursion. In our setting, the effective Fisher lower bound and the tangent residual turn this recursion into an approximate Polyak--Lojasiewicz inequality: the loss contracts until it reaches the parameterization residual floor, with an $O(T^{-1})$ loss rate under harmonic stepsizes. The terminal-distribution bound then takes a square root, giving the $O(T^{-1/2})$ rate in Corollary~\ref{cor:sgd-terminal-rate}. This is a population result for the stated training measure, rather than a generic claim that every nonconvex SGD trajectory converges globally.

When the sampling law changes with the iterate, or the backward policy is updated seperately, the objective seen by the SGD oracle thus evolves during training. Two-time-scale stochastic approximation provides the classical limiting-ODE and finite-time perspectives for coupled updates~\citep{borkar1997stochastic,konda2004convergence,kaledin2020finite}. The controlled-gradient and controlled-sampling assumptions in Section~\ref{sec:changing-sampling} and Section~\ref{sec:changing-pb} make the first- and second-moment effects of this drift explicit. This interpretation is related to small-gain analyses of single-timescale actor--critic methods~\citep{olshevsky2023a,chen2023finite} and to finite-time analyses with an evolving reward or objective~\citep{hu2026finitetime}, but the constants here are defined for the GFlowNet sampling measure and its imbalance features.

\section{Proofs for Section \ref{sec:objectives}}

\label{app:proof_4}

\paragraph{Proof of Lemma \ref{lemma:KL_bound}.}

The chain rule for relative entropy, applied to the terminal-state map
$\tau\mapsto s_{n(\tau)}$, gives
\begin{align*}
\mathrm{KL}(p_F\|p_B)
&=\mathrm{KL}(p_F^\top\|p_R)
+\sum_{x\in\mathcal X}p_F^\top(x)
\mathrm{KL}\bigl(p_F(\cdot\mid s_{n(\tau)}=x)\|p_B(\cdot\mid s_{n(\tau)}=x)\bigr)\\
&\geq\mathrm{KL}(p_F^\top\|p_R),\\
\mathrm{KL}(p_B\|p_F)
&=\mathrm{KL}(p_R\|p_F^\top)
+\sum_{x\in\mathcal X}p_R(x)
\mathrm{KL}\bigl(p_B(\cdot\mid s_{n(\tau)}=x)\|p_F(\cdot\mid s_{n(\tau)}=x)\bigr)\\
&\geq\mathrm{KL}(p_R\|p_F^\top).
\end{align*}
\hfill $\square$\par

\paragraph{Proof of Theorem \ref{thm:upper_bound_loss}.}

For FM and DB, the integrability condition in Section~\ref{sec:objectives}
and Tonelli's theorem give
\[
    \sum_{o\in\mathcal O}(\nu_F(o)+\nu_B(o))|h_{\bm\theta}(o)|
    =\mathbb E_{p_F}\sum_{o\in\tau}|h_{\bm\theta}(o)|
      +\mathbb E_{p_B}\sum_{o\in\tau}|h_{\bm\theta}(o)|<\infty.
\]
This condition is automatic on a finite DAG. It justifies the signed
sum interchange below by Fubini's theorem, since
$|p_F(\tau)-p_B(\tau)|\leq p_F(\tau)+p_B(\tau)$.
For TB, objects are complete trajectories, so no such interchange is
needed. If $c_\nu=0$, then $p_F=p_B$ and the TB claim is immediate.
Otherwise, it suffices to consider finite coverage ratio and finite loss,
since the claimed bound is vacuous in the remaining cases. Then
\[
    \sum_{\tau\in\mathcal T}|p_F(\tau)-p_B(\tau)|\,
       |h_{\bm\theta}(\tau)|
    =c_\nu\mathbb E_\nu|h_{\bm\theta}|
    \leq c_\nu\sqrt{2\left\|\frac\nu\mu\right\|_\infty
       \mathcal L_{\mathcal O}(\bm\theta)}<\infty,
\]
by Cauchy--Schwarz and change of measure. Thus the signed sums below
are absolutely convergent in every nonvacuous case.

According to Lemma \ref{lemma:KL_bound}, we have
\begin{align*}
    \mathrm{KL}(p_F^\top\|p_R)+\mathrm{KL}(p_R\|p_F^\top)
    &\leq\mathrm{KL}(p_F\|p_B)+\mathrm{KL}(p_B\|p_F)\\
    &=\sum_{\tau\in\mathcal T}\left(p_F(\tau)-p_B(\tau)\right)
    \log\frac{p_F(\tau)}{p_B(\tau)}.
\end{align*}

For FM-GFN, we have
\begin{align*}
    \log\frac{p_F(\tau)}{p_B(\tau)}
    &=\sum_{t=1}^{n(\tau)}
    \log\frac{F^{\rightarrow}(s_t)}{F^{\leftarrow}(s_{t-1})}
    +\log\frac{Z}{R(s_{n(\tau)})}\\
    &=\sum_{t=1}^{n(\tau)} h_{\bm\theta}(s_t)
    +\log\frac{Z}{F^{\leftarrow}(s_0)},
\end{align*}
where the last equality uses $F^{\leftarrow}(s_{n(\tau)})=R(s_{n(\tau)})$.

For DB-GFN, we have
\begin{align*}
    \log\frac{p_F(\tau)}{p_B(\tau)}
    =\sum_{t=1}^{n(\tau)} h_{\bm\theta}(s_{t-1}\to s_t)
    +\log\frac{Z}{F(s_0;\bm\theta)}.
\end{align*}

For TB-GFN, we have
\begin{align*}
    \log\frac{p_F(\tau)}{p_B(\tau)}
    =h_{\bm\theta}(\tau)+\log\frac{Z}{\widehat Z(\bm\theta)}.
\end{align*}

In every case, the log-density ratio is the sum of the relevant imbalances
along $\tau$ plus a term that does not depend on $\tau$.  Since
$\sum_{\tau\in\mathcal T}(p_F(\tau)-p_B(\tau))=0$, this constant cancels in
the symmetrized expression.  Therefore,
\begin{align*}
    &\sum_{\tau\in\mathcal T}\left(p_F(\tau)-p_B(\tau)\right)
    \log\frac{p_F(\tau)}{p_B(\tau)}\\
    &\qquad=\sum_{\tau\in\mathcal T}
    \left(p_F(\tau)-p_B(\tau)\right)
    \sum_{o\in\tau}h_{\bm\theta}(o)\\
    &\qquad=\sum_{o\in\mathcal{O}}
    \left(\nu_F(o)-\nu_B(o)\right)
    h_{\bm\theta}(o).
\end{align*}
If $c_\nu=0$, the last display is zero, so the nonnegative symmetric KL
divergence is also zero.  Suppose $c_\nu>0$.  If
$\|\nu/\mu\|_\infty=+\infty$, the loss bound is immediate in the extended
sense under the coverage convention preceding
Theorem~\ref{thm:upper_bound_loss}.  Otherwise, finiteness of the ratio
implies that $\mu$ covers the support of $\nu$, and
\begin{align*}
    &\sum_{o\in\mathcal{O}}\left(\nu_F(o)-\nu_B(o)\right)
    h_{\bm\theta}(o)\\
    &\qquad\leq\sum_{o\in\mathcal{O}}
    \left|\nu_F(o)-\nu_B(o)\right|
    |h_{\bm\theta}(o)|\\
    &\qquad=c_\nu\sum_{o\in\mathcal{O}}\nu(o)
    |h_{\bm\theta}(o)|\\
    &\qquad\leq c_\nu\sqrt{\sum_{o\in\mathcal{O}}\nu(o)
    h_{\bm\theta}(o)^2}\\
    &\qquad=c_\nu\sqrt{\sum_{o\in\mathcal{O}}\mu(o)
    \frac{\nu(o)}{\mu(o)}
    h_{\bm\theta}(o)^2}\\
    &\qquad\leq c_\nu\sqrt{\left\|\frac{\nu}{\mu}\right\|_\infty
    \sum_{o\in\mathcal{O}}\mu(o)
    h_{\bm\theta}(o)^2}\\
    &\qquad=c_\nu\sqrt{2\left\|\frac{\nu}{\mu}\right\|_\infty
    \mathcal{L}_{\mathcal{O}}(\bm{\theta})}.
\end{align*}
\hfill $\square$\par

\section{Proofs for Section~\ref{sec:stationary}}
\label{app:proof_5}

All functions below are evaluated on their relevant supports, and all flows
and policy probabilities are positive there.

\paragraph{Proof of Proposition~\ref{prop:tabular-zero-residual}.}
We show for each objective that the tabular tangent space contains the
current imbalance pointwise.  The claim for every $\mu$ then follows
immediately. For DB and TB, first hold the backward policy fixed. If independent backward coordinates are included in $\bm\theta$, assign zero to their direction; the same construction remains in the enlarged tangent space.

\emph{FM.}
Give every edge $e$ an independent coordinate $z_e=\log F(e)$.  For a
non-initial state $s$, the FM imbalance is
\[
h(s)=\log F^{\rightarrow}(s)-\log F^{\leftarrow}(s).
\]
For an incoming edge $e=(r\to s)$ and an outgoing edge $e=(s\to r)$,
respectively,
\[
\frac{\partial h(s)}{\partial z_e}
=\frac{F(e)}{F^{\rightarrow}(s)}>0,
\qquad
\frac{\partial h(s)}{\partial z_e}
=-\frac{F(e)}{F^{\leftarrow}(s)},
\]
where the outgoing-edge term is absent when $s$ is terminal.  Choose one
incoming edge $e_s=(p(s)\to s)$ for every training state and order the states
topologically.  In the Jacobian submatrix formed by these columns, column
$e_s$ has a nonzero diagonal entry in row $s$ and can have one additional
nonzero entry only in the earlier row $p(s)$.  This triangular submatrix has
nonzero diagonal, so the Jacobian has full row rank.  Its range is therefore
all functions on the FM training states, including $h$.

\emph{DB.}
Use independent log-state-flow coordinates for nonterminal states and
independent softmax logits for the forward policy at each nonterminal state.
Let $h(s,s')$ be the DB imbalance.  Set $V(x)=0$ for terminal states and
define, in reverse topological order,
\begin{align}
V(s)
&=\mathbb E_{s'\sim P_F(\cdot\mid s)}[h(s,s')+V(s')],
\label{eq:db-value-recursion}\\
A(s,s')
&=h(s,s')+V(s')-V(s).
\label{eq:db-advantage}
\end{align}
Then
\[
\mathbb E_{s'\sim P_F(\cdot\mid s)}[A(s,s')]=0,
\qquad
h(s,s')=V(s)-V(s')+A(s,s').
\]
A direction $V$ in the log-state-flow coordinates contributes
$V(s)-V(s')$, with $V(s')=0$ for a terminal child.  A logit direction $A$
contributes
\[
A(s,s')-
\mathbb E_{r\sim P_F(\cdot\mid s)}[A(s,r)]=A(s,s').
\]
The combined direction is exactly $h(s,s')$ on every edge, proving that the
DB imbalance lies in the tabular tangent space.

\emph{TB.}
Use the coordinate $z=\log\widehat Z$ and independent softmax logits for the
forward policy.  For each edge, define
\[
\ell(s,s'):=\log\frac{P_F(s'\mid s)}{P_B(s\mid s')}
-\begin{cases}
\log R(s'), & s'\in\mathcal X,\\
0, & s'\notin\mathcal X.
\end{cases}
\]
The reward term is evaluated only at terminal states. Every complete trajectory $\tau=(s_0,\ldots,s_{n(\tau)})$ has the additive
imbalance
\begin{equation}
h(\tau)=z+\sum_{t=1}^{n(\tau)}\ell(s_{t-1},s_t).
\label{eq:tb-additive-imbalance}
\end{equation}
Set $V(x)=0$ for every terminal state and define, in reverse topological
order,
\begin{align}
V(s)
&=\mathbb E_{s'\sim P_F(\cdot\mid s)}[\ell(s,s')+V(s')],
\label{eq:tb-value-recursion}\\
A(s,s')
&=\ell(s,s')+V(s')-V(s).
\label{eq:tb-advantage}
\end{align}
The advantage is centered under $P_F(\cdot\mid s)$.  Since
$\ell(s,s')=V(s)-V(s')+A(s,s')$, summing along any trajectory gives
\begin{equation}
h(\tau)=z+V(s_0)+\sum_{t=1}^{n(\tau)} A(s_{t-1},s_t).
\label{eq:tb-telescoping}
\end{equation}
Choose the direction in the normalizer coordinate to be $z+V(s_0)$ and the
logit direction on edge $(s,s')$ to be $A(s,s')$.  Centering makes the
corresponding softmax-score direction equal to $A(s,s')$, so the resulting
trajectory direction is exactly $h(\tau)$.

This argument does not require equal trajectory lengths.  The value is
defined once per state, which makes merged paths consistent, and assigning
zero value to every terminal handles multiple terminal states.  Thus the TB
imbalance lies in the tabular tangent space even though the number of
trajectories may exceed the number of parameters.

For FM, DB, and TB, we have proved
$h_{\bm\theta}\in\mathsf T_{\bm\theta}$ pointwise for every
$\bm\theta$.  Hence its orthogonal residual is zero in every $L_2(\mu)$ and
the supremum defining $\epsilon_\mu$ is zero.
\hfill $\square$\par

\paragraph{Proof of Lemma~\ref{lemma:local-coefficient}.}
Fix $\bm\theta$ and define the linear operator
\[
\Psi_{\bm\theta}:\mathbb R^d\to L_2(\mu),
\qquad
(\Psi_{\bm\theta}\bm u)(o)
=\bm u^\top\psi_{\bm\theta}(o).
\]
By construction,
$\operatorname{Range}(\Psi_{\bm\theta})=\mathsf T_{\bm\theta}$.
Its adjoint and Gram matrix are
\[
\Psi_{\bm\theta}^*f
=\mathbb E_\mu[\psi_{\bm\theta}(o)f(o)],
\qquad
\Psi_{\bm\theta}^*\Psi_{\bm\theta}
=\mathcal I_\mu(\bm\theta).
\]
The standard pseudoinverse formula for the orthogonal projection onto the
range of a finite-dimensional linear operator gives
\begin{equation}
\Pi_{\mathsf T_{\bm\theta}}^\mu
=\Psi_{\bm\theta}\mathcal I_\mu(\bm\theta)^\dagger
\Psi_{\bm\theta}^*.
\label{eq:tangent-projection-pseudoinverse}
\end{equation}
By the square-integrability and differentiation conventions in
Section~\ref{sec:preliminaries}, the operator above is bounded and
differentiation may pass through the loss expectation, with $\mu$ and any
separate $\bm\varphi$ fixed. Hence
\[
\Psi_{\bm\theta}^*h_{\bm\theta}
=\mathbb E_\mu[\psi_{\bm\theta}(o)h_{\bm\theta}(o)]
=\nabla_{\bm\theta}\frac12
\mathbb E_\mu[h_{\bm\theta}(o)^2]
=\nabla_{\bm\theta}\mathcal L_{\mathcal O}(\bm\theta).
\]
Substituting this identity into
Equation~\ref{eq:tangent-projection-pseudoinverse} yields
\[
\Pi_{\mathsf T_{\bm\theta}}^\mu h_{\bm\theta}
=\Psi_{\bm\theta}\bm u_\mu^*(\bm\theta)
={\bm u_\mu^*(\bm\theta)}^\top\psi_{\bm\theta}.
\]
The Moore--Penrose solution is the minimum-norm coefficient among all
coefficients representing this projection.  Assumption~\ref{assump:fisher}
then gives
\[
\|\bm u_\mu^*(\bm\theta)\|_2
\leq\|\mathcal I_\mu(\bm\theta)^\dagger\|_{2\to2}
\|\nabla_{\bm\theta}\mathcal L_{\mathcal O}(\bm\theta)\|_2
\leq\frac{1}{\lambda}
\|\nabla_{\bm\theta}\mathcal L_{\mathcal O}(\bm\theta)\|_2.
\]
\hfill $\square$\par

\paragraph{Proof of Theorem~\ref{thm:stationarity_to_optimality}.}
If $c_\nu=0$, then $\nu_F(o)=\nu_B(o)$ for every $o$, and the signed
occupancy bound established in the proof of
Theorem~\ref{thm:upper_bound_loss} in Appendix~\ref{app:proof_4}
implies that the symmetric
KL divergence is zero.  Suppose henceforth that $c_\nu>0$, so the discrepancy
distribution $\nu$ is well defined.

If $\|\nu/\mu\|_\infty=+\infty$, the claimed bound holds in the extended
sense under the coverage convention preceding
Theorem~\ref{thm:upper_bound_loss}.  Otherwise, finiteness of this ratio
implies that $\mu$ covers the support of $\nu$, so the changes of measure
below are valid.

The same signed occupancy bound from Appendix~\ref{app:proof_4} and
the definition of $\nu$ give
\begin{align*}
&\mathrm{KL}(p_F^\top\|p_R)+\mathrm{KL}(p_R\|p_F^\top)\\
&\qquad\leq\sum_{o\in\mathcal O}
\bigl(\nu_F(o)-\nu_B(o)\bigr)h_{\bm\theta}(o)
\leq c_\nu\mathbb E_{o\sim\nu}|h_{\bm\theta}(o)|.
\end{align*}
By Definition~\ref{def:local-residual} and
Lemma~\ref{lemma:local-coefficient},
\[
h_{\bm\theta}(o)
=r_{\mu,\bm\theta}(o)
+{\bm u_\mu^*(\bm\theta)}^{\!\top}\psi_{\bm\theta}(o).
\]
For the residual component, Cauchy--Schwarz and importance sampling yield
\begin{align*}
\mathbb E_\nu|r_{\mu,\bm\theta}|
&\leq\sqrt{\mathbb E_\nu[r_{\mu,\bm\theta}^2]}\\
&=\sqrt{\mathbb E_\mu\!\left[
\frac{\nu}{\mu}r_{\mu,\bm\theta}^2\right]}\\
&\leq\sqrt{\left\|\frac{\nu}{\mu}\right\|_\infty}
\|r_{\mu,\bm\theta}\|_\mu\\
&\leq\sqrt{\left\|\frac{\nu}{\mu}\right\|_\infty}
\epsilon_\mu.
\end{align*}
For the tangent component, Assumption~\ref{assump:score} and
Lemma~\ref{lemma:local-coefficient} imply
\begin{align*}
\mathbb E_\nu\left|
{\bm u_\mu^*(\bm\theta)}^{\!\top}\psi_{\bm\theta}(o)\right|
&\leq\|\bm u_\mu^*(\bm\theta)\|_2
\mathbb E_\nu\|\psi_{\bm\theta}(o)\|_2\\
&\leq\frac{M}{\lambda}
\|\nabla_{\bm\theta}\mathcal L_{\mathcal O}(\bm\theta)\|_2.
\end{align*}
Combining the two component bounds proves
\[
\mathrm{KL}(p_F^\top\|p_R)+\mathrm{KL}(p_R\|p_F^\top)
\leq c_\nu\left(
\sqrt{\left\|\frac{\nu}{\mu}\right\|_\infty}
\epsilon_\mu
+\frac{M}{\lambda}
\|\nabla_{\bm\theta}\mathcal L_{\mathcal O}(\bm\theta)\|_2
\right).
\]
\hfill $\square$\par

\section{Proofs for Section~\ref{sec:convergence-rates}}
\label{app:convergence-proofs}

We use the parameter blocks and conditional expectations defined in
Section~\ref{sec:convergence-rates}. Constants denoted by $C$ may increase
between inequalities but are independent of $t$ and $\bar\epsilon$.

\subsection{Geometry and a common descent estimate}
\label{app:convergence-geometry}

\paragraph{Proof of Lemma~\ref{lemma:approximate-PL}.}
The setup in Section~\ref{sec:convergence-rates} makes
$\bm u\mapsto\bm u^\top\nabla_{\bm\theta}h$ a bounded map from
$\mathbb R^d$ into $L_2(\mu)$. Its range is finite-dimensional and hence
closed, even when $\mathcal O$ is infinite. The interchange of
differentiation and summation assumed in Section~\ref{sec:preliminaries} gives
$\nabla_{\bm\theta}\mathcal L_{\mathcal O}
=\mathbb E_\mu[h\nabla_{\bm\theta}h]$, so the projection identity of
Lemma~\ref{lemma:local-coefficient} still applies.
At the current parameters, write
$\bm G=\nabla_{\bm\theta}\mathcal L_{\mathcal O}$,
$I=\mathcal I_\mu(\bm\theta)$, and let $\Pi$ be the $L_2(\mu)$ tangent
projection. Lemma~\ref{lemma:local-coefficient} gives the coefficient
$\bm u^*=I^\dagger\bm G$, so
\[
    \|\Pi h\|_\mu^2
    ={\bm u^*}^{\!\top}I\bm u^*
    =\bm G^\top I^\dagger\bm G.
\]
Since $r=h-\Pi h$ is orthogonal to $\Pi h$, Pythagoras and
Assumption~\ref{assump:uniform-tangent-geometry} yield
\[
    2\mathcal L_{\mathcal O}
    =\|r\|_\mu^2+\bm G^\top I^\dagger\bm G
    \leq\bar\epsilon^2+\frac{\|\bm G\|_2^2}{\lambda}.
\]
\hfill $\square$\par

\paragraph{A common one-step estimate.}
\label{app:convergence-controlled}
We prove a recursion under the conditions of
Theorem~\ref{thm:sgd-general-PB}. Without a separate backward update, set
$z_t=0$ and omit the backward sensitivity condition; the sampling bounds
then reduce to Assumption~\ref{assump:controlled-sampling}. With fixed
sampling, omit the sampling block and set its constants to zero.
Write $x_t=\|\nabla_{\bm\theta}\mathcal L_t\|_2$,
$\Delta_t^\theta=\bm\theta_{t+1}-\bm\theta_t$, and
$\Delta_t^\varphi=\bm\varphi_{t+1}-\bm\varphi_t$.
Joint smoothness gives
\begin{align}
    \mathbb E_t\mathcal L_{t+1}
    \leq{}&\mathcal L_t-\eta_t x_t^2
    +\langle\nabla_{\bm\omega}\mathcal L_t,
       \mathbb E_t\Delta_t^\omega\rangle
    +\langle\nabla_{\bm\varphi}\mathcal L_t,
       \mathbb E_t\Delta_t^\varphi\rangle\notag\\
    &+\frac L2\mathbb E_t\big[
       \|\Delta_t^\theta\|_2^2+\|\Delta_t^\omega\|_2^2
       +\|\Delta_t^\varphi\|_2^2\big].
    \label{eq:common-one-step-proof}
\end{align}
All gradients on the right are $\mathcal F_t$-measurable; no independence
between updates is needed. The sensitivity and update bounds, together
with $\|\mathbb E_t\Delta_t^\varphi\|_2\leq z_t$, bound the two drift
terms by
\[
    C_\omega K_\theta\eta_t\sqrt{2\mathcal L_t}\,x_t
    +(C_\omega K_\varphi+C_\varphi)\sqrt{2\mathcal L_t}\,z_t.
\]
Lemma~\ref{lemma:approximate-PL} implies
$\sqrt{2\mathcal L_t}\leq\bar\epsilon+x_t/\sqrt\lambda$.
With $\delta=1-\rho>0$, the one-step bound becomes
\[
    \mathbb E_t\mathcal L_{t+1}
    \leq\mathcal L_t-\delta\eta_t x_t^2
       +C\eta_t\bar\epsilon x_t+C(\bar\epsilon+x_t)z_t
       +C\eta_t^2(x_t^2+\sigma^2)+Cz_t^2.
\]
Young's inequality bounds the mixed terms as
\[
    C\eta_t\bar\epsilon x_t+C(\bar\epsilon+x_t)z_t
    \leq\frac\delta4\eta_t x_t^2
       +C\eta_t\bar\epsilon^2+C\frac{z_t^2}{\eta_t}.
\]
Choose $\eta_*>0$ sufficiently small that $\eta_*\leq1$ and the quadratic
term $C\eta_t^2x_t^2$ is at most $\delta\eta_tx_t^2/4$ whenever
$\eta_t\leq\eta_*$. Absorbing $Cz_t^2$ and the fixed noise constant gives
\begin{equation}
    \mathbb E_t\mathcal L_{t+1}
    \leq\mathcal L_t-\frac\delta2\eta_t x_t^2
       +C\eta_t\bar\epsilon^2+C\eta_t^2
       +C\frac{z_t^2}{\eta_t}.
    \label{eq:common-descent-proof}
\end{equation}
Using $x_t^2\geq\lambda(2\mathcal L_t-\bar\epsilon^2)$ yields
\begin{equation}
    \mathbb E_t\mathcal L_{t+1}
    \leq(1-c\eta_t)\mathcal L_t
       +C\eta_t\bar\epsilon^2+C\eta_t^2
       +C\frac{z_t^2}{\eta_t},
    \qquad c=\lambda(1-\rho)>0.
    \label{eq:general-PB-recursion}
\end{equation}
Decrease $\eta_*$ if needed so that $c\eta_*\leq1/2$.
Finite second moments of the updates imply integrability of $x_t^2$
by conditional Jensen and~(\ref{eq:balance-sgd}), and of $z_t^2$ by
definition. Starting from $\mathbb E\mathcal L_0<\infty$,
(\ref{eq:general-PB-recursion}) therefore gives finite expected loss at
each finite time and justifies taking total expectation.

\subsection{Fixed and controlled sampling rates}
\label{app:convergence-fixed}

\paragraph{A scalar comparison.}
Choose $a>1/c$, $t_0\geq\max\{1,a/\eta_*\}$, and
$\eta_t=a/(t+t_0)$. For any $0<\alpha\leq1$, a nonnegative sequence
satisfying
\[
    u_{t+1}\leq(1-c\eta_t)u_t+\frac C{(t+t_0)^{1+\alpha}}
\]
obeys $u_t\leq D/(t+t_0)^\alpha$ for sufficiently large $D$.
Indeed, choose $D\geq t_0^\alpha u_0$ and $(ca-\alpha)D\geq C$.
With $k=t+t_0$, induction and convexity of $k\mapsto k^{-\alpha}$ give
\[
    u_{t+1}\leq\frac D{k^\alpha}-\frac{caD-C}{k^{1+\alpha}}
    \leq\frac D{k^\alpha}-\frac{\alpha D}{k^{1+\alpha}}
    \leq\frac D{(k+1)^\alpha}.
\]
The contraction factors are nonnegative by the choice of $t_0$.

\paragraph{Proofs of Theorems~\ref{thm:sgd-fixed-sampling}
and~\ref{thm:sgd-controlled-sampling}.}
In~(\ref{eq:general-PB-recursion}), set $z_t=0$; fixed sampling also gives
$\rho=0$. Take total expectation and choose $A\geq C/c$ for the
coefficient of $\eta_t\bar\epsilon^2$. Then
$u_t=(\mathbb E\mathcal L_t-A\bar\epsilon^2)_+$ satisfies
\[
    u_{t+1}\leq(1-c\eta_t)u_t+C\eta_t^2.
\]
The scalar comparison with $\alpha=1$ proves~(\ref{eq:sgd-loss-rate}).
FM is included because both policies are induced by $\bm\theta$; joint
backward training is included because its parameters belong to the same
descent block. These arguments require finite-dimensional parameters,
not a finite object space. The tabular zero-residual specialization in
Section~\ref{sec:convergence-rates} is restricted to finite DAGs.
\hfill $\square$\par

\paragraph{Conversion to terminal accuracy.}
We first justify the bound of Theorem~\ref{thm:upper_bound_loss} under
Assumption~\ref{assump:terminal-control} on a countable DAG. Fix the
current parameters and suppress $t$. Almost-sure termination makes
$p_F,p_B$ probability distributions on finite complete trajectories,
with backward terminal marginal $p_R$. For FM and DB the integrability
condition is
\[
    \mathbb E_{\tau\sim p_F}\sum_{o\in\tau}|h(o)|
    +\mathbb E_{\tau\sim p_B}\sum_{o\in\tau}|h(o)|<\infty.
\]
Tonelli's theorem identifies this quantity with
$\sum_o(\nu_F(o)+\nu_B(o))|h(o)|$. It therefore justifies the signed
sum interchange in Appendix~\ref{app:proof_4}. The telescoping identities
hold on each finite trajectory, and their trajectory-independent
constants cancel because $p_F,p_B$ are normalized. Thus the same signed
bound holds for terminal symmetric KL.

For TB, there is no interchange between trajectory and object sums.
The identity $\log(p_F/p_B)=h+\log(Z/\widehat Z)$ and normalization give
the signed bound directly. Its absolute convergence follows from
\[
    \sum_\tau|p_F(\tau)-p_B(\tau)|\,|h(\tau)|
    \leq c_\nu\sqrt{2\bar C_\nu\mathcal L_{\mathcal O}}<\infty
\]
when $c_\nu>0$; when $c_\nu=0$, $p_F=p_B$.
In all three cases, the change of measure and Cauchy--Schwarz argument
of Theorem~\ref{thm:upper_bound_loss} is valid on the countable object
space. With the deterministic bounds from
Assumption~\ref{assump:terminal-control}, Jensen's inequality now gives
\begin{align*}
    \mathbb E\big[\mathrm{KL}(p_{F,T}^\top\|p_R)
       +\mathrm{KL}(p_R\|p_{F,T}^\top)\big]
    &\leq\bar B\sqrt{2\bar C_\nu}\,
       \mathbb E\sqrt{\mathcal L_T}\\
    &\leq\bar B\sqrt{2\bar C_\nu\,\mathbb E\mathcal L_T}.
\end{align*}
This also holds when $c_{\nu,T}=0$, since the terminal discrepancy is
then zero. Substituting~(\ref{eq:sgd-loss-rate}) proves
(\ref{eq:sgd-terminal-rate}); the same conversion applies to the bounds below.

For TB, $c_{\nu,t}\leq2$ follows from normalization alone. For FM and
DB, Tonelli's theorem gives
\[
    c_{\nu,t}\leq\sum_o(\nu_{F,t}(o)+\nu_{B,t}(o))
    =\mathbb E_{\tau\sim p_{F,t}}n(\tau)
       +\mathbb E_{\tau\sim p_{B,t}}n(\tau).
\]
A deterministic uniform bound on this sum of expectations, conditional
on the current parameters, suffices for $\bar B$. Finiteness at each
iterate alone does not give a uniform constant. On a finite DAG of
maximum path length $H$, one may take $\bar B=2H$ for FM and DB;
the absolute-imbalance integrability condition also holds automatically.

\subsection{Separate backward updates}
\label{app:convergence-general}

\paragraph{Proof of Theorem~\ref{thm:sgd-general-PB}.}
Taking total expectation in~(\ref{eq:general-PB-recursion}) uses
$d_t^2=\mathbb E z_t^2=\mathbb E\|\bm\varphi_{t+1}-\bm\varphi_t\|_2^2$.
With the stepsize and residual shift from
Appendix~\ref{app:convergence-fixed},
\begin{equation}
    u_{t+1}\leq(1-c\eta_t)u_t+C\eta_t^2
       +C\frac{d_t^2}{\eta_t},
    \qquad u_t=(\mathbb E\mathcal L_t-A\bar\epsilon^2)_+.
    \label{eq:PB-expectation-recursion}
\end{equation}
If $d_t=o(\eta_t)$, the additive term is $\eta_t\xi_t$ for a deterministic
sequence $\xi_t\to0$. For any $\varepsilon>0$, eventually
$\xi_t\leq c\varepsilon$, so
\[
    (u_{t+1}-\varepsilon)_+
    \leq(1-c\eta_t)(u_t-\varepsilon)_+.
\]
Since $\sum_t\eta_t=\infty$, the product of contraction factors tends to
zero. Hence $\limsup_tu_t\leq\varepsilon$, and letting
$\varepsilon\downarrow0$ proves the first claim.
If $d_t=O(\eta_t^{3/2})$, the additional forcing is $O(\eta_t^2)$;
the scalar comparison with $\alpha=1$ proves the second claim, absorbing
any finite initial segment into the constant. The terminal bounds follow
from Appendix~\ref{app:convergence-fixed}.
\hfill $\square$\par

\paragraph{Intermediate decay rates.}
Under the hypotheses of Theorem~\ref{thm:sgd-general-PB}, if
$d_t=O(t^{-b})$ with $1<b<3/2$, then
\[
    \mathbb E\mathcal L_T=O(\bar\epsilon^2+T^{-2(b-1)}).
\]
Indeed, the forcing in~(\ref{eq:PB-expectation-recursion}) is
$O((t+t_0)^{-2}+(t+t_0)^{1-2b})$.
The scalar comparison applies with $\alpha=2(b-1)\in(0,1)$.
Under Assumption~\ref{assump:terminal-control}, the expected terminal symmetric KL is consequently
$O(\bar\epsilon+T^{-(b-1)})$.

\subsection{Sufficient conditions for sensitivity, sampling, and coverage}
\label{app:convergence-regularity}

The following examples verify individual conditions used above; they do
not replace the remaining hypotheses of the theorems. The derivative
identities use the interchange convention in Section~\ref{sec:preliminaries},
with the external arguments made explicit in Section~\ref{sec:convergence-rates}.
For a countable object space, a sufficient condition for that interchange
is local domination of each derivative of the summand
$\mu_{\bm\omega}(o)h_{\bm\theta,\bm\varphi}(o)^2/2$
by a summable sequence on a parameter neighborhood, with finite loss at
one point of that neighborhood. The mean value theorem and dominated
convergence then justify differentiating the sum in each relevant
coordinate. The square-integrability of $h$ and its $\bm\theta$-derivatives
also ensures that the tangent operator in Appendix~\ref{app:proof_5} is
bounded and that $\mathbb E_\mu[h\nabla_{\bm\theta}h]$ is finite by
Cauchy--Schwarz. On a finite object space these conditions hold for
continuously differentiable, finite imbalances and sampling probabilities
on a parameter neighborhood.

\paragraph{Direct sensitivity bounds.}
Holding sampling fixed, Cauchy--Schwarz gives
\[
    \|\nabla_{\bm\varphi}\mathcal L_{\mathcal O}\|_2
    =\|\mathbb E_\mu[h\nabla_{\bm\varphi}h]\|_2
    \leq\sqrt{2\mathcal L_{\mathcal O}}
       \left(\mathbb E_\mu\|\nabla_{\bm\varphi}h\|_2^2\right)^{1/2}.
\]
A uniform bound on the second factor supplies $C_\varphi$. For DB,
$\nabla_{\bm\varphi}h$ is the negative backward log-policy score on an
edge; for TB it is the sum of these negative scores along the trajectory.
Uniformly bounded edge scores suffice for DB. For TB they suffice if
$\mathbb E_{\tau\sim\mu_{\bm\omega_t}}n(\tau)^2$ is uniformly bounded;
this follows in particular from bounded trajectory lengths. This second
moment under the training distribution is distinct from the occupancy
mass condition under $p_{F,t},p_{B,t}$. For
neural policies, these bounds concern derivatives with respect to the
network parameters and do not follow from softmax normalization alone.

For differentiable sampling probabilities on a fixed positive support,
let $s_\omega(o)=\nabla_{\bm\omega}\log\mu_{\bm\omega}(o)$. If
$\|s_\omega(o)\|_2\leq S_\omega$ and $|h(o)|\leq B_h$ uniformly, then
\[
    \|\nabla_{\bm\omega}\mathcal L_{\mathcal O}\|_2
    =\frac12\|\mathbb E_\mu[h^2s_\omega]\|_2
    \leq\frac{S_\omega B_h}{2}\mathbb E_\mu|h|
    \leq\frac{S_\omega B_h}{2}\sqrt{2\mathcal L_{\mathcal O}}.
\]
Thus $C_\omega=S_\omega B_h/2$ suffices. Bounded sampling scores alone
do not ensure this square-root bound for unbounded imbalances.

\paragraph{Affine sampling updates.}
Suppose
$\Delta_t^\omega=A_t\Delta_t^\theta+B_t\Delta_t^\varphi$,
where $A_t,B_t$ are $\mathcal F_t$-measurable and their operator norms
are bounded by $k_\theta,k_\varphi$. Conditional Jensen and the balance
oracle give
\begin{align*}
    \|\mathbb E_t\Delta_t^\omega\|_2
    &\leq k_\theta\eta_t\|\nabla_{\bm\theta}\mathcal L_t\|_2
       +k_\varphi z_t,\\
    \mathbb E_t\|\Delta_t^\omega\|_2^2
    &\leq2k_\theta^2\eta_t^2
       (\|\nabla_{\bm\theta}\mathcal L_t\|_2^2+\sigma^2)
       +2k_\varphi^2z_t^2.
\end{align*}
Hence Assumption \ref{assump:controlled-sampling-extension} holds with
$K_\theta=k_\theta$, $K_\varphi=k_\varphi$, and
$K_2=2\max\{k_\theta^2,k_\varphi^2\}$; omit the backward term when
that block is absent. This includes parameter copying
$\bm\omega_t=\bm\theta_t$, but the sensitivity bound and $\rho<1$
remain separate requirements. A nonlinear parameter map must satisfy
the conditional-mean bound directly: a Lipschitz increment bound alone
controls second moments, not the mean's Taylor remainder.

\subsection{Sampling stabilisation: exploration, replay, and stale targets}
\label{app:sampling-stabilization}

This subsection gives representative ways to verify the hypotheses of
Theorem~\ref{thm:sgd-controlled-sampling}, relating familiar exploration,
replay, and target-update mechanisms to its constants. Omit any separate
backward update and write $x_t=\|\nabla_{\bm\theta}\mathcal L_t\|_2$. Collect
lag, refresh, and perturbation terms in
\begin{align}
    \|\mathbb E_t\Delta_t^\omega\|_2
       &\leq K_\theta\eta_t x_t+b_t, \\
    \mathbb E_t\|\Delta_t^\omega\|_2^2
       &\leq K_2\eta_t^2(x_t^2+\sigma^2)+v_t.
    \label{eq:sampler-drift-bookkeeping}
\end{align}
When $b_t=v_t=0$, these are exactly
Assumption~\ref{assump:controlled-sampling}; otherwise they must be
absorbed into its bounds; retaining them as forcing terms requires a
corresponding extension of the descent analysis.

\paragraph{Sampling sensitivity.}
Uniform-random exploration is familiar from $\varepsilon$-greedy
policies~\citep{sutton2018reinforcement}; GFlowNet exploration also uses
noisy policies and tempering~\citep{malek2026loss}.
We analyse an $\varepsilon$-noisy mixture over complete training objects:
choose a fixed exploration distribution $q$ with probability
$\varepsilon_{\mathrm{exp}}$, and the current sampler otherwise. For
$0\leq\varepsilon_{\mathrm{exp}}<1$, let
\begin{equation}
    \mu_{\bm\omega}^{(\varepsilon)}
       =(1-\varepsilon_{\mathrm{exp}})p_{\bm\omega}
          +\varepsilon_{\mathrm{exp}}q,\qquad
    \mathcal L_{\varepsilon}
       =(1-\varepsilon_{\mathrm{exp}})\mathcal L_{\mathrm{fresh}}
          +\varepsilon_{\mathrm{exp}}\mathcal L_q.
    \label{eq:epsilon-noisy-mixture}
\end{equation}
If
$\|\nabla_{\bm\omega}\mathcal L_{\mathrm{fresh}}\|_2
 \leq C_{\omega,\mathrm{fresh}}\sqrt{2\mathcal L_{\mathrm{fresh}}}$,
then
\begin{equation}
    \|\nabla_{\bm\omega}\mathcal L_{\varepsilon}\|_2
       \leq \sqrt{1-\varepsilon_{\mathrm{exp}}}\,
          C_{\omega,\mathrm{fresh}}\sqrt{2\mathcal L_{\varepsilon}}.
    \label{eq:epsilon-noisy-sensitivity}
\end{equation}
Thus fixed exploration changes $C_\omega$, while the underlying update
rule determines $K_\theta$ and $K_2$. For exploration mixed at each
action, the induced trajectory law is a product of local mixtures;
its sensitivity follows from the resulting trajectory score.

Temperature scaling, also called softmax or Boltzmann
exploration~\citep{sutton2018reinforcement,malek2026loss}, flattens a
sampling distribution at temperatures above one.
For logits $\ell_{\bm\omega}(o)$ and fixed $\tau>0$ such that
$Z_\tau(\bm\omega):=\sum_o\exp(\ell_{\bm\omega}(o)/\tau)<\infty$
on the parameter region under consideration, and with differentiation
permitted through this normalizing sum,
\begin{equation}
    p_{\bm\omega}^{(\tau)}(o)
      =\frac{\exp(\ell_{\bm\omega}(o)/\tau)}
             {\sum_{o'}\exp(\ell_{\bm\omega}(o')/\tau)},\qquad
    \nabla_{\bm\omega}\log p_{\bm\omega}^{(\tau)}(o)
       =\frac1\tau\left(
          \nabla_{\bm\omega}\ell_{\bm\omega}(o)
          -\mathbb E_{p_{\bm\omega}^{(\tau)}}
             \nabla_{\bm\omega}\ell_{\bm\omega}(o')\right).
    \label{eq:tempered-score}
\end{equation}
Both requirements are automatic for finitely many continuously
differentiable logits. On a countable object space, a uniform local bound
$\|\nabla_{\bm\omega}\ell_{\bm\omega}(o)\|_2\leq G$ together with
$Z_\tau(\bm\omega_0)<\infty$ at a reference point suffices locally:
on a convex neighborhood of radius $r$ about $\bm\omega_0$,
\[
    e^{\ell_{\bm\omega}(o)/\tau}
    \leq e^{Gr/\tau}e^{\ell_{\bm\omega_0}(o)/\tau},\qquad
    \left\|\nabla_{\bm\omega}e^{\ell_{\bm\omega}(o)/\tau}\right\|_2
    \leq\frac G\tau e^{Gr/\tau}e^{\ell_{\bm\omega_0}(o)/\tau}.
\]
These summable bounds justify both normalization and its derivative.
Finiteness at one temperature does not imply finiteness at a higher
temperature: logits $\ell(n)=-2\log n$, $n\geq1$, are normalizable at
$\tau=1$ but not at $\tau=2$.

If $\|\nabla_{\bm\omega}\ell_{\bm\omega}(o)\|_2\leq G$ and $|h(o)|\leq B_h$
uniformly, the score bound $2G/\tau$ and Appendix~\ref{app:convergence-regularity}
give $C_{\omega,\tau}=B_hG/\tau$. For actionwise tempering on trajectories
of length at most $H$, summing local scores gives the bound $B_hHG/\tau$.
Fixed tempering leaves $K_\theta,K_2$ determined by the update rule.

Experience replay reuses previously collected data, as in
DQN~\citep{mnih2015human}; prioritized experience replay further changes
their sampling weights~\citep{schaul2016prioritized}. Freezing both buffer
contents and weights gives fixed sampling for the associated empirical
objective. For a fixed mixture of replay and fresh samples,
$\mu_{\bm\omega}=(1-\beta)q+\beta p_{\bm\omega}$, the same argument gives
\begin{equation}
    C_{\omega,\mathrm{mix}}
       =\sqrt{\beta}\,C_{\omega,\mathrm{fresh}},
    \qquad 0\leq\beta\leq1.
    \label{eq:replay-mixture-sensitivity}
\end{equation}
Coverage and the effective Fisher lower bound must be recomputed under
the resulting distribution. Changing buffer contents, exploration
weights, temperatures, or mixture weights adds drift. A differentiable
parameterisation of these changes allows their increments to enter
$b_t,v_t$; direct changes of the sampling measure can instead be bounded
in the loss. For replay, the oracle is evaluated against the empirical
or mixture objective actually used.

\paragraph{Update size and stale targets.}
If the sampler follows damped balance increments,
\[
    \bm\omega_{t+1}=\bm\omega_t+
        \alpha(\bm\theta_{t+1}-\bm\theta_t),\qquad 0<\alpha\leq1,
\]
then
\[
    \mathbb E_t\Delta_t^\omega
       =-\alpha\eta_t\nabla_{\bm\theta}\mathcal L_t,\qquad
    \mathbb E_t\|\Delta_t^\omega\|_2^2
       \leq\alpha^2\eta_t^2(x_t^2+\sigma^2),
\]
so $K_\theta=\alpha$ and $K_2=\alpha^2$. This provides a direct way to
limit sampler motion by scaling each new balance increment.

Soft target updates, often called Polyak or exponential moving average
(EMA) updates, are used in DDPG~\citep{lillicrap2016continuous}.
Here we apply that update rule to a sampling copy; the cited RL target
networks are used to form learning targets. With
$\bm\omega_{t+1}=(1-\alpha)\bm\omega_t+\alpha\bm\theta_{t+1}$
and $\bm e_t=\bm\theta_t-\bm\omega_t$, the increment satisfies
\begin{equation}
    \Delta_t^\omega=\alpha\bm e_t+\alpha\Delta_t^\theta,\qquad
    \|\mathbb E_t\Delta_t^\omega\|_2
       \leq\alpha\|\bm e_t\|_2+\alpha\eta_t x_t.
    \label{eq:polyak-sampling-lag}
\end{equation}
The tracking condition
$\|\bm e_t\|_2\leq\kappa\eta_t x_t$ reduces this to
$K_\theta=\alpha(1+\kappa)$ and
$K_2=2\alpha^2(1+\kappa^2)$. Polyak averaging alone does not provide
this tracking condition.

Periodic hard target updates in DQN~\citep{mnih2015human} motivate a
lazy sampling copy, held fixed between synchronisations. Let
$I_t$ indicate a copy step and
$\bm b_t=\bm\theta_t-\bm\omega_t$. Then
\begin{equation}
  \begin{aligned}
    \Delta_t^\omega&=I_t(\bm b_t+\Delta_t^\theta),\qquad
    \|\mathbb E_t\Delta_t^\omega\|_2
       \leq I_t(\|\bm b_t\|_2+\eta_t x_t),\\
    \mathbb E_t\|\Delta_t^\omega\|_2^2
       &\leq2I_t\|\bm b_t\|_2^2+
          2I_t\eta_t^2(x_t^2+\sigma^2).
  \end{aligned}
    \label{eq:lazy-periodic-bounds}
\end{equation}
For a fixed stale copy $\bm\omega_t=\bm\theta_{t-m}$,
$\Delta_t^\omega=\Delta_{t-m}^\theta$. A bound on $\bm b_t$ or on the
past update in terms of the current $x_t$ is therefore required before
these rules can be represented by finite $K_\theta,K_2$. In particular,
the period $m$ does not imply $K_\theta=1/m$. Conditionally zero-mean
additive update noise preserves the first-order bound and can be
accounted for by increasing $K_2,v_t$; biased noise also contributes
to $b_t$.

\section{Extension to Sub-Trajectory Balance}
\label{app:subtb}

Sub-Trajectory Balance (SubTB; \citealt{madan2023learning}) interpolates
between single-edge and complete-trajectory credit assignment. This
appendix extends the distributional, stationarity, and convergence
results of Sections~\ref{sec:objectives}--\ref{sec:convergence-rates}
using the additive structure of its imbalance. Coverage, tangent
geometry, and drift remain explicit assumptions. Throughout this
appendix, the training-object space in the unified notation of
Section~\ref{sec:preliminaries} is
$\mathcal O=\mathcal O_{\mathrm{sub}}$.

\paragraph{Objective and sampling measure.}
Let $\mathcal O_{\mathrm{sub}}$ be the set of nonempty finite directed
subtrajectories, including single edges and complete trajectories. Use a
positive state flow $F(s;\bm\theta)$ on nonterminal states and set
$F(x;\bm\theta)=R(x)$ on terminal states. For
$\sigma=(s_i,\ldots,s_j)$ with length $\ell(\sigma)=j-i\geq1$, define
\begin{equation}
\begin{aligned}
    F^{\rightarrow}(\sigma;\bm\theta)
       &:=F(s_i;\bm\theta)
          \prod_{k=i}^{j-1}P_F(s_{k+1}\mid s_k;\bm\theta),\\
    F^{\leftarrow}(\sigma;\bm\theta,\bm\varphi)
       &:=F(s_j;\bm\theta)
          \prod_{k=i}^{j-1}P_B(s_k\mid s_{k+1};\bm\varphi),\\
    h_{\bm\theta,\bm\varphi}(\sigma)
       &:=\log\frac{F^{\rightarrow}(\sigma;\bm\theta)}
                         {F^{\leftarrow}(\sigma;\bm\theta,\bm\varphi)}
       &=\sum_{k=i}^{j-1}
              h^{\mathrm{DB}}_{\bm\theta,\bm\varphi}(s_k,s_{k+1}).
\end{aligned}
\label{eq:subtb-additive}
\end{equation}
The last equality follows by telescoping the intermediate state flows. A
single edge gives DB, and a complete trajectory gives TB with
$\widehat Z=F(s_0;\bm\theta)$. As in the main text, independently
specified backward parameters are held fixed under a $\bm\theta$
derivative, whereas jointly balance-trained backward parameters are part
of $\bm\theta$. We suppress fixed parameter arguments below and write
$h_{\bm\theta}$ and $h$ when no ambiguity can arise.

For a complete trajectory $\tau=(s_0,\ldots,s_n)$, normalized length
weights can be written as
\[
    w_{ij}^{\gamma}(\tau)
       :=\frac{\gamma^{j-i}}
                    {\sum_{0\leq a<b\leq n}\gamma^{b-a}},
    \qquad 0\leq i<j\leq n,\quad\gamma>0.
\]
Here $\gamma$ denotes the SubTB weighting parameter, distinct from the
effective Fisher lower bound $\lambda$. Given a trajectory sampling law
 $q_{\bm\omega}$, the induced normalized
distribution on subtrajectories is
\begin{equation}
    \mu_{\bm\omega}(\sigma)
       :=\mathbb E_{\tau\sim q_{\bm\omega}}
           \sum_{0\leq i<j\leq n(\tau)}
             w_{ij}^{\gamma}(\tau)
             \mathbf 1\{\tau_{i:j}=\sigma\}.
    \label{eq:subtb-sampling}
\end{equation}
Its loss is the unified balance objective
$\mathcal L_{\mathcal O}(\bm\theta,\bm\omega;\bm\varphi)
=\frac12\mathbb E_{\mu_{\bm\omega}}[h_{\bm\theta,\bm\varphi}^2]$,
equivalently the expected weighted sum of subtrajectory losses. For each
fixed finite trajectory,
$\gamma\downarrow0$ gives the uniform average of its edge losses and
$\gamma\to\infty$ gives its complete-trajectory loss. The induced
sampling weights, as well as the imbalance formula, matter for these
DB/TB limits. Passing a limit through an expectation on an unbounded
trajectory space requires an integrable dominating function.
The results below also allow other normalized subtrajectory distributions,
including those constructed from incomplete episodes. For a fixed sampling
law, write $\mu=\mu_{\bm\omega}$ and suppress the fixed external
arguments of $\mathcal L_{\mathcal O}$. Sampling is held fixed when
differentiating the balance loss, as in Section~\ref{sec:convergence-rates}.

\subsection{From SubTB loss to terminal-distribution accuracy}
\label{app:subtb-distribution}

Fix a deterministic, parameter-independent rule $\mathcal P$ that
partitions every complete trajectory into consecutive nonempty blocks:
\[
    0=i_0<i_1<\cdots<i_m=n(\tau),
    \qquad
    \mathcal P(\tau)=\{\tau_{i_{r-1}:i_r}:1\leq r\leq m\}.
\]
This rule is used only to organize the proof; the training distribution
may still average over all overlapping subtrajectories. For the forward and
backward trajectory laws of Section~\ref{sec:objectives}, define the block
occupancies
\[
    \nu_F(\sigma):=\sum_{\tau:\,\sigma\in\mathcal P(\tau)}p_F(\tau),
    \qquad
    \nu_B(\sigma):=\sum_{\tau:\,\sigma\in\mathcal P(\tau)}p_B(\tau).
\]
In this subsection, $\nu_F,\nu_B$, and the quantities below implicitly
refer to this fixed rule $\mathcal P$. Set
\[
    c_\nu:=\sum_{\sigma\in\mathcal O}
       |\nu_F(\sigma)-\nu_B(\sigma)|,
    \qquad
    \nu(\sigma):=\frac{|\nu_F(\sigma)-\nu_B(\sigma)|}{c_\nu}
\]
when $c_\nu>0$. The zero-denominator conventions are those of
Theorem~\ref{thm:upper_bound_loss}. Since every block contains at least
one edge,
\begin{equation}
    c_\nu\leq
       \mathbb E_{p_F}|\mathcal P(\tau)|
       +\mathbb E_{p_B}|\mathcal P(\tau)|
       \leq\mathbb E_{p_F}n(\tau)+\mathbb E_{p_B}n(\tau).
    \label{eq:subtb-occupancy-mass}
\end{equation}

\begin{proposition}[SubTB loss and terminal mismatch]
\label{prop:subtb-distribution}
Suppose the positive forward and backward policies induce probability
laws $p_F,p_B$ on complete trajectories, $0<Z<\infty$, and
$F(s_0)$ is finite and positive. Assume finite expected block counts and
integrability of $\sum_{\sigma\in\mathcal P(\tau)}|h(\sigma)|$ under
both laws. If $c_\nu=0$, then
\[
    \mathrm{KL}(p_F^\top\|p_R)+\mathrm{KL}(p_R\|p_F^\top)=0.
\]
Otherwise, if $\|\nu/\mu\|_\infty<\infty$ and
$\mathcal L_{\mathcal O}<\infty$, then
\begin{equation}
\begin{aligned}
    &\mathrm{KL}(p_F^\top\|p_R)+\mathrm{KL}(p_R\|p_F^\top)\\
    &\quad\leq\sum_{\sigma\in\mathcal O}
          (\nu_F(\sigma)-\nu_B(\sigma))h(\sigma)\\
    &\quad\leq c_\nu
          \sqrt{2\left\|\frac{\nu}{\mu}\right\|_\infty
                       \mathcal L_{\mathcal O}}.
\end{aligned}
\label{eq:subtb-distribution-bound}
\end{equation}
Without finite coverage, the loss bound is understood as vacuous.
\end{proposition}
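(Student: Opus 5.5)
The plan follows the proof of Theorem~\ref{thm:upper_bound_loss} line by line, with the fixed partition rule $\mathcal P$ playing the role that the edge decomposition plays for DB. First, Lemma~\ref{lemma:KL_bound} bounds the terminal symmetric KL by $\mathrm{KL}(p_F\|p_B)+\mathrm{KL}(p_B\|p_F)=\sum_\tau(p_F(\tau)-p_B(\tau))\log\frac{p_F(\tau)}{p_B(\tau)}$. Second, for a complete trajectory $\tau=(s_0,\ldots,s_n)$ I will write the log-ratio additively:
\[
\log\frac{p_F(\tau)}{p_B(\tau)}=\sum_{t=1}^{n}h^{\mathrm{DB}}(s_{t-1},s_t)+\log\frac{Z}{F(s_0)},
\]
which follows by telescoping state flows with $F(x)=R(x)$ at the terminal, exactly as in the DB case. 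Because the blocks of $\mathcal P(\tau)$ are consecutive and disjoint and cover all edges of $\tau$, the additive formula~(\ref{eq:subtb-additive}) regroups the edge sum as $\sum_{\sigma\in\mathcal P(\tau)}h(\sigma)$. The constant $\log(Z/F(s_0))$ is finite by the positivity hypotheses, and it cancels because $\sum_\tau(p_F-p_B)=0$.

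Third, I will exchange the order of summation: $\sum_\tau(p_F-p_B)\sum_{\sigma\in\mathcal P(\tau)}h(\sigma)=\sum_\sigma(\nu_F(\sigma)-\nu_B(\sigma))h(\sigma)$. This is the first inequality in~(\ref{eq:subtb-distribution-bound}). If $c_\nu=0$, the right side vanishes, so the nonnegative symmetric KL is zero. The second inequality uses the same chain as before: pass to $|\nu_F-\nu_B||h|=c_\nu\nu|h|$, apply Cauchy--Schwarz under $\nu$, change measure to $\mu$ using $\|\nu/\mu\|_\infty$, and identify $\mathbb E_\mu h^2=2\mathcal L_{\mathcal O}$. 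A useful point is that $\mu$ may charge overlapping subtrajectories. This causes no difficulty, since the bound needs only that $\mu$ dominates $\nu$ pointwise on $\mathcal O_{\mathrm{sub}}$ with bounded ratio.

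The main obstacle is justifying the interchange of the trajectory and block sums, along with the cancellation of the constant, on a possibly infinite DAG. To handle it, I will use the assumed integrability of $\sum_{\sigma\in\mathcal P(\tau)}|h(\sigma)|$ under both $p_F$ and $p_B$. By Tonelli, this gives $\sum_\sigma(\nu_F+\nu_B)|h|<\infty$. Since $|p_F-p_B|\le p_F+p_B$, Fubini then applies to the signed sum. The finite expected block counts ensure that $\nu_F,\nu_B$ are finite measures and yield~(\ref{eq:subtb-occupancy-mass}). Together, these make every series absolutely convergent, so the regrouping and cancellation are legitimate.
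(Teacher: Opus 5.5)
Your proposal is correct and follows essentially the same route as the paper: telescope the SubTB imbalance over the fixed partition to get $\log(p_F/p_B)=\sum_{\sigma\in\mathcal P(\tau)}h(\sigma)+\log(Z/F(s_0))$, cancel the constant, interchange the trajectory and block sums using the integrability hypothesis, and finish with Lemma~\ref{lemma:KL_bound}, Cauchy--Schwarz, and change of measure. Your Tonelli/Fubini argument with $|p_F-p_B|\le p_F+p_B$ just spells out the interchange step that the paper states in a single line.
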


\begin{proof}
The additive identity in~\eqref{eq:subtb-additive} and the partition property give
\[
    \sum_{\sigma\in\mathcal P(\tau)}h(\sigma)
       =\log\frac{F(s_0)\prod_{k=0}^{n(\tau)-1}P_F(s_{k+1}\mid s_k)}
                        {R(s_{n(\tau)})
                         \prod_{k=0}^{n(\tau)-1}P_B(s_k\mid s_{k+1})}.
\]
Thus $\log(p_F(\tau)/p_B(\tau))$ is this sum plus
$\log(Z/F(s_0))$. The latter is constant in $\tau$ and cancels in the
symmetric trajectory KL. Integrability permits interchanging trajectory
and block sums, so
\[
    \mathrm{KL}(p_F\|p_B)+\mathrm{KL}(p_B\|p_F)
       =\sum_{\sigma\in\mathcal O}
          (\nu_F(\sigma)-\nu_B(\sigma))h(\sigma).
\]
Lemma~\ref{lemma:KL_bound} gives the first inequality and the case
$c_\nu=0$. For $c_\nu>0$, the last sum is at most
$c_\nu\mathbb E_\nu|h|$. Cauchy--Schwarz and change of measure give the
second inequality.
\end{proof}

Single-edge partitions recover the DB occupancy construction, while a
single block per trajectory recovers the TB construction and gives
$c_\nu\leq2$. Intermediate partitions give the same identity. Summing
all overlapping subtrajectories would count an edge $(s_k,s_{k+1})$
exactly $(k+1)(n-k)$ times, so those occurrence counts cannot replace
block occupancies in this proof. The training objective may nevertheless
average all overlapping subtrajectories through~\eqref{eq:subtb-sampling}.
Different fixed partitions can have different coverage ratios. The
single-edge partition is available for positive all-length weights,
subject to coverage by the trajectory sampling law; the complete-trajectory
partition is natural at the TB endpoint. For restricted or incomplete
sampling, the proposition is informative only when the chosen partition
has finite discrepancy coverage.

\subsection{From SubTB stationarity to optimality}
\label{app:subtb-stationarity}

Use the score, tangent space, residual, and effective Fisher matrix from
Section~\ref{sec:stationary}, now with the SubTB imbalance $h$:
\begin{align*}
    \psi_{\bm\theta}(\sigma)
       &:=\nabla_{\bm\theta}h_{\bm\theta}(\sigma),
    &\mathsf T_{\bm\theta}
       &:=\{\bm u^\top\psi_{\bm\theta}(\cdot):
                   \bm u\in\mathbb R^d\},\\
    r_{\mu,\bm\theta}
       &:=(I-\Pi_{\mathsf T_{\bm\theta}}^\mu)h_{\bm\theta},
    &\mathcal I_\mu(\bm\theta)
       &:=\mathbb E_\mu[\psi_{\bm\theta}\psi_{\bm\theta}^{\top}].
\end{align*}
Let
$\epsilon_\mu:=\sup_{\bm\theta\in\Theta}
\|r_{\mu,\bm\theta}\|_\mu$. Assume finite loss and score second
moments and justify differentiation under the expectation, for example by
local summable domination on a countable object space. These requirements
hold pointwise for finite positive smooth tabular models.

\begin{proposition}[SubTB stationarity bound]
\label{prop:subtb-stationarity}
Under the hypotheses of Proposition~\ref{prop:subtb-distribution}, suppose
$c_\nu>0$, $\|\nu/\mu\|_\infty<\infty$, and
\[
    \mathbb E_\nu\|\psi_{\bm\theta}(\sigma)\|_2\leq M,
    \qquad
    \|\mathcal I_\mu(\bm\theta)^\dagger\|_{2\to2}
       \leq\frac1\lambda.
\]
Then
\begin{equation}
\begin{aligned}
    &\mathrm{KL}(p_F^\top\|p_R)+\mathrm{KL}(p_R\|p_F^\top)\leq c_\nu\left(
       \sqrt{\left\|\frac{\nu}{\mu}\right\|_\infty}\,
               \epsilon_\mu
             +\frac{M}{\lambda}
                \|\nabla_{\bm\theta}\mathcal L_{\mathcal O}\|_2\right).
\end{aligned}
\label{eq:subtb-stationarity-bound}
\end{equation}
\end{proposition}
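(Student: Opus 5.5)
The plan is to repeat the argument of Theorem~\ref{thm:stationarity_to_optimality} verbatim, with the FM/DB/TB occupancy identity replaced by the block identity of Proposition~\ref{prop:subtb-distribution}. The starting point is the first inequality in~\eqref{eq:subtb-distribution-bound}, proved under the present hypotheses. It gives
\[
\mathrm{KL}(p_F^\top\|p_R)+\mathrm{KL}(p_R\|p_F^\top)\leq\sum_{\sigma}(\nu_F(\sigma)-\nu_B(\sigma))h(\sigma)\leq c_\nu\,\mathbb E_{\nu}|h_{\bm\theta}|.
\]
This reduces the claim to bounding $\mathbb E_\nu|h_{\bm\theta}|$, where $\nu$ is the block-discrepancy distribution for the fixed partition rule $\mathcal P$. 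The tangent quantities, by contrast, are computed under the subtrajectory training measure $\mu$ of~\eqref{eq:subtb-sampling}. Since both $\nu$ and $\mu$ live on the same space $\mathcal O_{\mathrm{sub}}$, the change of measure makes sense.

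Next, decompose $h_{\bm\theta}=r_{\mu,\bm\theta}+\Pi^\mu_{\mathsf T_{\bm\theta}}h_{\bm\theta}$. Lemma~\ref{lemma:local-coefficient} does not depend on the form of $h$; it only needs finite score second moments, the identity $\nabla_{\bm\theta}\mathcal L_{\mathcal O}=\mathbb E_\mu[h\psi_{\bm\theta}]$, and the pseudo-inverse bound. The first two are assumed here, and the third is the stated Fisher condition. The lemma therefore gives $\Pi^\mu_{\mathsf T_{\bm\theta}}h_{\bm\theta}={\bm u^*}^\top\psi_{\bm\theta}$ with $\|\bm u^*\|_2\le\lambda^{-1}\|\nabla_{\bm\theta}\mathcal L_{\mathcal O}\|_2$. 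The tangent-space projection is well defined because the range of $\bm u\mapsto\bm u^\top\psi_{\bm\theta}$ is finite-dimensional and hence closed, exactly as in the proof of Lemma~\ref{lemma:approximate-PL}.

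The residual part is bounded by Cauchy--Schwarz and the change of measure $\mathbb E_\nu r^2\le\|\nu/\mu\|_\infty\|r\|_\mu^2$, giving $\mathbb E_\nu|r_{\mu,\bm\theta}|\le\sqrt{\|\nu/\mu\|_\infty}\,\epsilon_\mu$; the finite coverage ratio ensures $\mu$ charges the support of $\nu$. The tangent part is bounded by $\|\bm u^*\|_2\,\mathbb E_\nu\|\psi_{\bm\theta}\|_2\le (M/\lambda)\|\nabla_{\bm\theta}\mathcal L_{\mathcal O}\|_2$. The triangle inequality then yields~\eqref{eq:subtb-stationarity-bound}.

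The main point needing care is the well-definedness of the split under $\nu$ rather than $\mu$. The identity $h=r+{\bm u^*}^\top\psi$ holds only $\mu$-almost everywhere, since $L_2(\mu)$ identifies functions that agree $\mu$-a.s. Finite coverage gives $\nu\ll\mu$, so the identity also holds $\nu$-a.e., and both $\nu$-expectations are finite. Beyond this measure-theoretic check, no new GFlowNet-specific work should be needed.
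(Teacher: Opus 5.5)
Your proposal is correct and follows the paper's proof essentially step for step. It uses the signed block-occupancy bound of Proposition~\ref{prop:subtb-distribution}, the decomposition $h_{\bm\theta}=r_{\mu,\bm\theta}+{\bm u_\mu^*}^\top\psi_{\bm\theta}$ from Lemma~\ref{lemma:local-coefficient}, and separate bounds on the two pieces (Cauchy--Schwarz with change of measure for the residual, the score bound $M$ for the tangent part). Your check that finite coverage gives $\nu\ll\mu$, so the $\mu$-a.e.\ decomposition also holds $\nu$-a.e., spells out what the paper compresses into ``finite coverage transfers the residual.''
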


\begin{proof}
As in Lemma~\ref{lemma:local-coefficient}, the normal equations give
\[
    \bm u_\mu^*(\bm\theta)
       =\mathcal I_\mu(\bm\theta)^\dagger
          \nabla_{\bm\theta}\mathcal L_{\mathcal O},
    \qquad
    \|\bm u_\mu^*(\bm\theta)\|_2
       \leq\lambda^{-1}
          \|\nabla_{\bm\theta}\mathcal L_{\mathcal O}\|_2.
\]
The projection identity is
$\Pi_{\mathsf T_{\bm\theta}}^\mu h_{\bm\theta}
=(\bm u_\mu^*)^\top\psi_{\bm\theta}$, so
$h_{\bm\theta}=r_{\mu,\bm\theta}+(\bm u_\mu^*)^\top\psi_{\bm\theta}$.
Finite coverage transfers the residual from $L_2(\mu)$ to the
occupancy discrepancy. Applying the signed bound in
Proposition~\ref{prop:subtb-distribution} and bounding the two components
separately gives
\[
\begin{aligned}
    \mathrm{KL}(p_F^\top\|p_R)+\mathrm{KL}(p_R\|p_F^\top)
    &\leq c_\nu\mathbb E_\nu|h_{\bm\theta}|\\
    &\leq c_\nu\left(
       \sqrt{\left\|\frac{\nu}{\mu}\right\|_\infty}
          \|r_{\mu,\bm\theta}\|_\mu
       +M\|\bm u_\mu^*\|_2\right).
\end{aligned}
\]
The claim follows from the definition of $\epsilon_\mu$. If $c_\nu=0$,
Proposition~\ref{prop:subtb-distribution} already gives zero terminal
mismatch.
\end{proof}

\begin{proposition}[Tabular zero residual for SubTB]
\label{prop:subtb-zero-residual}
On a finite DAG, independent nonterminal log-state-flows and independent
forward softmax logits give
$h_{\bm\theta}\in\mathsf T_{\bm\theta}$ pointwise at every positive
parameter setting. Therefore $\epsilon_\mu=0$ for every sampling measure
$\mu$. The statement also holds with independent backward coordinates
jointly trained by the balance loss.
\end{proposition}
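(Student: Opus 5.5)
The plan is to reduce SubTB to the DB construction from the proof of Proposition~\ref{prop:tabular-zero-residual}, using the additive identity~\eqref{eq:subtb-additive}. Fix the backward policy first; when independent backward coordinates are part of $\bm\theta$, give their directions coefficient zero, so the same tangent vector lies in the enlarged tangent space. Since $h(\sigma)=\sum_{k=i}^{j-1}h^{\mathrm{DB}}(s_k,s_{k+1})$, and differentiation is linear, the score satisfies
\[
    \bm u^\top\psi_{\bm\theta}(\sigma)=\sum_{k=i}^{j-1}\bm u^\top\nabla_{\bm\theta}h^{\mathrm{DB}}(s_k,s_{k+1})
\]
for every direction $\bm u$. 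It therefore suffices to find one $\bm u$ whose first-order change of every edge imbalance equals that edge imbalance. Summing along $\sigma$ then gives $\bm u^\top\psi_{\bm\theta}(\sigma)=h(\sigma)$ for every subtrajectory simultaneously.

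Two checks are needed before the DB argument can be reused.

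\emph{First check: the edge imbalances are exactly DB.} The edge imbalances $h^{\mathrm{DB}}$ here use $F(x)=R(x)$ at terminal states, which is exactly the DB convention for $F^{\leftarrow}$. So they are the DB imbalances for the same parameter blocks: independent nonterminal log-state-flows and forward softmax logits.

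\emph{Second check: the DB direction is valid.} Repeat the reverse-topological recursion~\eqref{eq:db-value-recursion}--\eqref{eq:db-advantage}. Set $V(x)=0$ on terminals, $V(s)=\mathbb E_{P_F(\cdot\mid s)}[h(s,s')+V(s')]$, and $A(s,s')=h(s,s')+V(s')-V(s)$. This gives $h(s,s')=V(s)-V(s')+A(s,s')$ with $A$ centered under $P_F(\cdot\mid s)$. Take the direction that adds $V(s)$ to each nonterminal log-flow coordinate and $A(s,s')$ to the corresponding logit. The log-flow part contributes $V(s)-V(s')$, with zero at a terminal child because $F(x)=R(x)$ is not a parameter. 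The softmax part contributes $A(s,s')-\mathbb E_{P_F(\cdot\mid s)}A=A(s,s')$. Hence the edge-level tangent vector equals $h^{\mathrm{DB}}$ exactly, and by additivity $h_{\bm\theta}\in\mathsf T_{\bm\theta}$ pointwise on $\mathcal O_{\mathrm{sub}}$.

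Because the DAG is finite, the recursion is well defined and all quantities are finite. Pointwise membership then gives $r_{\mu,\bm\theta}=0$ in every $L_2(\mu)$, and taking the supremum over $\bm\theta$ gives $\epsilon_\mu=0$.

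The main obstacle I expect is bookkeeping rather than mathematics. Subtrajectories starting at $s_0$ involve $F(s_0;\bm\theta)$, so $s_0$ must be treated as a nonterminal state with its own log-flow coordinate (playing the role of $\log\widehat Z$ for TB). Subtrajectories ending at terminals must not involve any flow parameter. Once this is checked, the construction is simultaneously the DB one (single edges) and the TB one (complete trajectories, via telescoping to $V(s_0)+\sum A$), which confirms consistency with Proposition~\ref{prop:tabular-zero-residual}.
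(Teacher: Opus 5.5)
Your proposal is correct and follows the paper's proof essentially step for step. Both build the DB potential $V$ and the centered advantage $A$ by reverse-topological recursion on the current edge imbalances, then take the single direction with $V$ on the log-state-flows, $A$ on the forward logits and zero on any backward coordinates, which reproduces every edge imbalance, and sum it along each subtrajectory via the additive identity; your remarks that $s_0$ carries its own log-flow coordinate and that the construction reduces to the TB one on complete trajectories are correct bookkeeping the paper leaves implicit.
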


\begin{proof}
For the current DB edge imbalances, the construction in
Appendix~\ref{app:proof_5} defines $V(x)=0$ on terminal states and
\[
    V(s)=\mathbb E_{s'\sim P_F(\cdot\mid s)}
             [h^{\mathrm{DB}}(s,s')+V(s')],
    \qquad
    A(s,s')=h^{\mathrm{DB}}(s,s')+V(s')-V(s).
\]
The advantage has zero conditional mean under $P_F$. Give the log-state-flow
coordinates direction $V$ and the forward-logit coordinates direction $A$.
This single direction $\bm u$ satisfies
$\bm u^\top\nabla h^{\mathrm{DB}}(s,s')=h^{\mathrm{DB}}(s,s')$
on every edge. Summing along any subtrajectory gives
\[
    h(\sigma)=V(s_i)-V(s_j)+
       \sum_{k=i}^{j-1}A(s_k,s_{k+1})
\]
and, by~\eqref{eq:subtb-additive},
\[
    \bm u^\top\nabla h(\sigma)
       =\sum_{k=i}^{j-1}\bm u^\top
          \nabla h^{\mathrm{DB}}(s_k,s_{k+1})
       =h(\sigma).
\]
The endpoint term $V(s_j)$ remains when the subtrajectory ends at an
internal state. The same statewise construction works for overlapping
segments and different lengths. Independent backward coordinates, if
present, receive direction zero. Thus $h_{\bm\theta}$ belongs pointwise
to $\mathsf T_{\bm\theta}$ and the projection residual vanishes under
every $\mu$.
\end{proof}

\paragraph{Forward-looking SubTB.}
\label{app:subtb-forward-looking}
For the fixed-energy parameterization of \citet{pan2023better}, write
$F(s;\bm\theta)=e^{-E(s)}\widetilde F(s;\bm\theta)$, with
$E(x)=-\log R(x)$ and $\widetilde F(x)=1$ on terminals. Direct
substitution gives
\begin{equation}
    h(\sigma)
       =\log\frac{\widetilde F(s_i;\bm\theta)
                         \prod_{k=i}^{j-1}P_F(s_{k+1}\mid s_k)}
                        {\widetilde F(s_j;\bm\theta)
                         \prod_{k=i}^{j-1}P_B(s_k\mid s_{k+1})}
         +E(s_j)-E(s_i).
    \label{eq:fl-subtb}
\end{equation}
For a single edge this is FL-DB. Along a longer segment the intermediate
energy evaluations cancel, leaving its endpoint difference. The
effective terminal flow remains $R(x)$, so the distributional proof is
unchanged. With $E$ fixed, the independent coordinates
$\log F(s)=\log\widetilde F(s)-E(s)$ differ by a translation with
identity derivative, and Proposition~\ref{prop:subtb-zero-residual}
applies to the translated tabular model.

For a neural log-flow $\widetilde f_{\bm\theta}(s)
=\log\widetilde F(s;\bm\theta)$, the energy term contributes no parameter
derivative. If only the log-flow output is shifted, with the policy maps
and $\mu$ fixed, the score features, tangent space, and Fisher are
unchanged at a fixed parameter. However, the imbalance being projected
changes and need not lie in that tangent space. Along actual training,
the iterates and sampling law can change, so the residual and uniform
conditioning assumptions must still be checked for
$\widetilde f_{\bm\theta}-E$. The given energy is distinct from our
imbalance-dependent tangent witness $V$. Learned or externally changing
energies add derivative or drift terms and require a separate analysis.

\subsection{Convergence of stochastic SubTB training}
\label{app:subtb-convergence}

Let
$\mathcal L_t=\mathcal L_{\mathcal O}(\bm\theta_t,\bm\omega_t;\bm\varphi_t)$.
The balance update differentiates the pointwise SubTB loss while holding
sampling and any separate backward parameters fixed. The following
conditions specify exactly which parts of Section~\ref{sec:convergence-rates}
must be checked for this objective:
\begin{enumerate}
    \item The SubTB oracle satisfies~\eqref{eq:balance-sgd}; the initial
    expected loss is finite, and the loss is jointly smooth on the
    admissible update region as in Assumption~\ref{assump:sgd-smoothness}.
    Loss derivatives can be interchanged with expectations, and the
    required update and score second moments are finite.
    \item Almost surely along training,
    $\|r_{\mu_{\bm\omega_t},\bm\theta_t}\|_{\mu_{\bm\omega_t}}\leq\bar\epsilon$
    and $\|\mathcal I_{\mu_{\bm\omega_t}}(\bm\theta_t)^\dagger\|_{2\to2}
    \leq1/\lambda$, with deterministic uniform constants.
    \item For changing sampling, the SubTB loss satisfies
    Assumptions~\ref{assump:controlled-omega-gradient}
    and~\ref{assump:controlled-sampling}, with
    $C_\omega K_\theta/\sqrt\lambda<1$.
    With a separate backward update, also require
    Assumption~\ref{assump:controlled-varphi-gradient} and use
    Assumption~\ref{assump:controlled-sampling-extension} instead of
    Assumption~\ref{assump:controlled-sampling}. Fixed sampling omits
    the sampling block and has $\rho=0$.
    \item For terminal-distribution conclusions, the hypotheses of
    Proposition~\ref{prop:subtb-distribution} hold at every iterate for
    one fixed partition rule, and
    $c_{\nu,t}\leq\bar B$,
    $\|\nu_t/\mu_{\bm\omega_t}\|_\infty\leq\bar C_\nu$ whenever
    $c_{\nu,t}>0$, with deterministic finite bounds.
\end{enumerate}
In particular, the last item includes integrability of the absolute
block-imbalance sum under both current trajectory laws. Finite DAGs
ensure finite block counts and pointwise integrability, but do not
automatically give a uniform coverage or conditioning bound.

\begin{corollary}[SubTB training rates]
\label{cor:subtb-rates}
Under items 1--3, with fixed or jointly balance-trained $P_B$, suitable
$a,t_0>0$ and $\eta_t=a/(t+t_0)$ give
\[
    \mathbb E\mathcal L_T=O(\bar\epsilon^2+T^{-1}).
\]
If item 4 also holds, then
\begin{equation}
    \mathbb E[\mathrm{KL}(p_{F,T}^\top\|p_R)+\mathrm{KL}(p_R\|p_{F,T}^\top)]
    =O(\bar\epsilon+T^{-1/2}).
\end{equation}
For a separately updated backward policy, define $z_t,d_t$ as in
Section~\ref{sec:convergence-rates}. Under items 1--3,
\begin{equation}
    \mathbb E_t\mathcal L_{t+1}
       \leq(1-c\eta_t)\mathcal L_t
          +C\eta_t\bar\epsilon^2+C\eta_t^2
          +C\frac{z_t^2}{\eta_t}
    \label{eq:subtb-drift-recursion}
\end{equation}
for suitable $c,C>0$. Consequently $d_t=o(\eta_t)$ gives
$\limsup_t\mathbb E\mathcal L_t=O(\bar\epsilon^2)$;
$d_t=O(\eta_t^{3/2})$ preserves the $O(\bar\epsilon^2+T^{-1})$
loss bound; and $d_t=O(t^{-b})$, $1<b<3/2$, gives
$O(\bar\epsilon^2+T^{-2(b-1)})$.
Under item 4, these imply, respectively, an $O(\bar\epsilon)$ terminal
neighborhood, an $O(\bar\epsilon+T^{-1/2})$ terminal bound, and an
$O(\bar\epsilon+T^{-(b-1)})$ terminal bound.
\end{corollary}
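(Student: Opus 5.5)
The plan is to reduce Corollary~\ref{cor:subtb-rates} to the abstract machinery of Appendix~\ref{app:convergence-proofs}, which never used the specific form of the FM/DB/TB imbalance. It only used the unified objective $\mathcal L_{\mathcal O}=\frac12\|h\|_\mu^2$, the oracle~\eqref{eq:balance-sgd}, smoothness, the tangent geometry, and the drift assumptions. Items 1--3 state exactly these hypotheses for the SubTB imbalance on $\mathcal O_{\mathrm{sub}}$, so the common one-step estimate should transfer verbatim.

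\emph{Step 1: approximate PL.} I would re-derive Lemma~\ref{lemma:approximate-PL} for SubTB. The map $\bm u\mapsto\bm u^\top\psi_{\bm\theta}$ is bounded into $L_2(\mu)$ by the finite score second moments in item 1, so its range is finite-dimensional and closed. The interchange assumption gives $\nabla_{\bm\theta}\mathcal L_{\mathcal O}=\mathbb E_\mu[h\psi_{\bm\theta}]$. The projection identity $\Pi h=(\mathcal I_\mu^\dagger\nabla\mathcal L)^\top\psi$ and Pythagoras then give $2\mathcal L_t\le\bar\epsilon^2+\lambda^{-1}\|\nabla_{\bm\theta}\mathcal L_t\|_2^2$, using item 2. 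This is the same computation as in the proof of Proposition~\ref{prop:subtb-stationarity}.

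\emph{Step 2: one-step recursion.} I would repeat the derivation of~\eqref{eq:general-PB-recursion}. The steps are: expand $\mathcal L_{t+1}$ using joint $L$-smoothness; bound the $\bm\omega$ and $\bm\varphi$ drift inner products by the sensitivity and update assumptions from item 3; replace $\sqrt{2\mathcal L_t}$ by $\bar\epsilon+x_t/\sqrt\lambda$; use $\rho<1$ to keep the coefficient $\delta=1-\rho$ on $-\eta_tx_t^2$; and absorb the mixed terms by Young's inequality and the $\eta_t^2x_t^2$ term by a small $\eta_*$. Applying PL again yields~\eqref{eq:subtb-drift-recursion} with $c=\lambda(1-\rho)$, and with $z_t=0$ when there is no separate backward update. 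Integrability from $\mathbb E\mathcal L_0<\infty$ justifies taking total expectation.

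\emph{Step 3: rates.} Setting $u_t=(\mathbb E\mathcal L_t-A\bar\epsilon^2)_+$, the scalar comparison of Appendix~\ref{app:convergence-fixed} with $\alpha=1$ gives $O(\bar\epsilon^2+T^{-1})$ in the fixed or jointly trained case. For the separate backward update, taking expectations turns $z_t^2/\eta_t$ into $d_t^2/\eta_t$. The three regimes then follow exactly as in the proof of Theorem~\ref{thm:sgd-general-PB} and the intermediate-rate paragraph: $d_t=o(\eta_t)$ gives the $\limsup$ bound by the $(u_t-\varepsilon)_+$ contraction; $d_t=O(\eta_t^{3/2})$ gives forcing $O(\eta_t^2)$; and $d_t=O(t^{-b})$ gives $\alpha=2(b-1)$.

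\emph{Step 4: terminal conversion.} Under item 4, Proposition~\ref{prop:subtb-distribution} at iterate $T$ gives symmetric KL $\le c_{\nu,T}\sqrt{2\|\nu_T/\mu_T\|_\infty\mathcal L_T}\le\bar B\sqrt{2\bar C_\nu\mathcal L_T}$. The bound is trivial when $c_{\nu,T}=0$. Jensen's inequality $\mathbb E\sqrt{\mathcal L_T}\le\sqrt{\mathbb E\mathcal L_T}$ and $\sqrt{a+b}\le\sqrt a+\sqrt b$ then convert each loss rate into the stated KL rate.

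The main obstacle is Step 1's regularity check on the countable space $\mathcal O_{\mathrm{sub}}$. There the imbalance is a sum of DB terms over possibly long segments, so square-integrability of $h$ and $\psi$ under $\mu$, and the gradient interchange, are not automatic. I would discharge this by appealing to the explicit assumptions in item 1, that is, finite score second moments and local summable domination, rather than proving them. Everything else is a line-by-line transcription of the existing proofs.
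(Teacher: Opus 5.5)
Your proposal is correct and follows the paper's own proof. The paper also (i) re-derives the approximate PL inequality $2\mathcal L_t\le\bar\epsilon^2+\lambda^{-1}\|\nabla_{\bm\theta}\mathcal L_t\|_2^2$ for the SubTB imbalance by projection and Pythagoras, (ii) observes that the common one-step estimate of Appendix~\ref{app:convergence-geometry} uses only this inequality, the unbiased oracle, joint smoothness, and the sensitivity and update bounds, (iii) reuses the same scalar-comparison arguments for the three drift regimes, and (iv) converts to terminal symmetric KL via Proposition~\ref{prop:subtb-distribution} and Jensen's inequality.
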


\begin{proof}
Projection and Pythagoras give
\[
    2\mathcal L_t
       =\|r_{\mu_{\bm\omega_t},\bm\theta_t}\|_{\mu_{\bm\omega_t}}^2
          +(\nabla_{\bm\theta}\mathcal L_t)^\top
             \mathcal I_{\mu_{\bm\omega_t}}(\bm\theta_t)^\dagger
                \nabla_{\bm\theta}\mathcal L_t
       \leq\bar\epsilon^2+
             \lambda^{-1}\|\nabla_{\bm\theta}\mathcal L_t\|_2^2.
\]
The common descent proof in Appendix~\ref{app:convergence-geometry}
uses only this inequality, the unbiased oracle, joint smoothness, and
the specified sensitivity and update bounds. Applying it to
$\mathcal L_{\mathcal O}$ proves~\eqref{eq:subtb-drift-recursion},
with $z_t=0$ when no separate backward update is present. The scalar
recursion arguments in Appendix~\ref{app:convergence-proofs} give the
stated rates with the same stepsize choices. Finally,
Proposition~\ref{prop:subtb-distribution} and Jensen's inequality yield
$\mathbb E[\mathrm{KL}(p_{F,T}^\top\|p_R)+\mathrm{KL}(p_R\|p_{F,T}^\top)]
    \leq\bar B\sqrt{2\bar C_\nu\mathbb E\mathcal L_T}.$
\end{proof}

\paragraph{Length moments and the limits of the extension.}
For a separately parameterized backward policy,
\[
    \nabla_{\bm\varphi}h(\sigma)
       =-\sum_{k=i}^{j-1}
          \nabla_{\bm\varphi}\log P_B(s_k\mid s_{k+1};\bm\varphi).
\]
If each edge score has norm at most $S_B$, then
$\mathbb E_{\mu_{\bm\omega_t}}\|\nabla_{\bm\varphi}h\|_2^2
\leq S_B^2\mathbb E_{\mu_{\bm\omega_t}}\ell(\sigma)^2$.
A uniform second-moment bound on sampled subtrajectory lengths therefore
supplies the backward sensitivity constant through Cauchy--Schwarz.
Analogous bounded endpoint log-flow scores and forward edge scores
control the balance-score second moment. These moments concern the
training law $\mu_{\bm\omega_t}$; they are distinct from block-count and
integrability conditions under $p_{F,t},p_{B,t}$. Bounded imbalance and
sampling scores give the sampling sensitivity bound as in
Appendix~\ref{app:convergence-regularity}. On countable spaces, local
summable domination of derivatives is one sufficient way to justify
the required interchanges of differentiation and expectation.

Changing $\gamma$ affects coverage, conditioning, and noise through
$\mu_{\bm\omega_t}$; these constants need not be uniform in $\gamma$. Tabular zero
residual alone does not ensure convergence. Fixed-energy FL-SubTB
in~\eqref{eq:fl-subtb} obeys the same conditional guarantees, and replay
and Polyak targets retain the qualifications of
Appendix~\ref{app:sampling-stabilization}.

\end{document}